\documentclass[11pt]{article}
\usepackage[a4paper,margin=1in]{geometry}
\usepackage{graphicx}
\usepackage{multirow}
\usepackage{amsmath,amssymb,amsfonts}
\usepackage{mathrsfs}
\usepackage{xcolor}
\usepackage{textcomp}
\usepackage{manyfoot}
\usepackage{booktabs}
\usepackage{algorithm}
\usepackage{algorithmic}
\usepackage{listings}
\usepackage{amsthm}

\usepackage[normalem]{ulem}
\usepackage{float}

\usepackage[english]{babel}
\usepackage[latin1]{inputenc}
\usepackage{enumerate}
\usepackage{color}
\usepackage[T1]{fontenc}
\usepackage{epstopdf}
\usepackage{subfigure}
\usepackage{dsfont}
\usepackage[T1]{fontenc}
\usepackage{amsmath}
\usepackage{mathtools}
\usepackage{amstext}
\usepackage{amssymb}
\usepackage{mathrsfs}
\usepackage{mathtools}
\usepackage{tikz}
\usetikzlibrary{arrows,positioning}
\usepackage{pifont}

\usepackage{tikz}
\usetikzlibrary{shapes,arrows}
\tikzstyle{block} = [draw, fill=white, rectangle,
minimum height=3em, minimum width=6em]
\tikzstyle{sum} = [draw, fill=white, circle, node distance=1cm]
\tikzstyle{input} = [coordinate]
\tikzstyle{output} = [coordinate]
\tikzstyle{pinstyle} = [pin edge={to-,thin,black}]

\usepackage{multicol}
\usepackage{lipsum}
\setlength{\columnseprule}{0.4pt}

\usepackage{tcolorbox}
\tcbuselibrary{skins,breakable}
\usetikzlibrary{shadings,shadows}

\newcommand{\R}{\mathbb{R}}

\newcommand{\cl}{\mathop{\rm cl}}

\newcommand{\cspan}{\mathop{\rm span}}
\newcommand{\softmax}{\mathop{\rm softmax}}

\newcommand{\Law}{\mathop{\rm Law}}

\newcommand{\ee}{\mathop{\rm e}}

\newcommand{\argmax}{\mathop{\rm arg\, max}}
\newcommand{\argmin}{\mathop{\rm arg\, min}}

\newcommand{\bmu}{{\boldsymbol \mu}}

\newcommand{\bpi}{{\boldsymbol \pi}}

\newcommand{\blambda}{{\boldsymbol \lambda}}
\newcommand{\btheta}{{\boldsymbol \theta}}

\newcommand{\bh}{{\bf h}}

\newcommand{\sX}{{\mathsf X}}

\newcommand{\sA}{{\mathsf A}}

\newcommand{\sZ}{{\mathsf Z}}

\newcommand{\A}{{\mathcal A}}
\newcommand{\cH}{{\mathcal H}}
\newcommand{\cL}{{\mathcal L}}

\newcommand{\D}{{\mathcal D}}

\newcommand{\V}{{\mathcal V}}
\newcommand{\Pnew}{{\mathcal P}}

\newcommand{\cG}{{\mathcal G}}

\newcommand{\tv}[1]{\textcolor{black}{#1}} 
\newcommand{\tb}[1]{\textcolor{black}{#1}} 
\newcommand{\lk}[1]{\textcolor{black}{#1}} 

\newtheorem{proposition}{Proposition}[section]
\newtheorem{definition}{Definition}[section]
\newtheorem{theorem}{Theorem}[section]
\newtheorem{assumption}{Assumption}[section]

\newtheorem{remark}{Remark}

\allowdisplaybreaks


\usepackage{cite}
\usepackage{amsmath,amssymb,amsfonts}
\usepackage{algorithmic}
\usepackage{graphicx}
\usepackage{algorithm,algorithmic}
\usepackage{hyperref}
\hypersetup{hidelinks=true}
\usepackage{textcomp}
\def\BibTeX{{\rm B\kern-.05em{\sc i\kern-.025em b}\kern-.08em
    T\kern-.1667em\lower.7ex\hbox{E}\kern-.125emX}}
\markboth{\hskip25pc IEEE TRANSACTIONS AND JOURNALS TEMPLATE}
{Author \MakeLowercase{\textit{et al.}}: Title}
\begin{document}
\title{Kernel Based Maximum Entropy Inverse Reinforcement Learning for Mean-Field Games}
\date{}
\author{Berkay Anahtarci, Can Deha Kariksiz, and Naci Saldi\thanks{Berkay Anahtarci and Can Deha Kariksiz are with the Department of Mathematical Engineering, \"{O}zye\u{g}in University, {I}stanbul, Turkey (e-mail: berkay.anahtarci@ozyegin.edu.tr, deha.kariksiz@ozyegin.edu.tr). Naci Saldi is with the Department of Mathematics at Bilkent University, Ankara, Turkey (e-mail: naci.saldi@bilkent.edu.tr). This work was supported by the Scientific and Technological Research Council of Turkey (TUBITAK), under Grant no: 1001-124F134.}}\maketitle 

\begin{abstract}
\tb{We consider the maximum causal entropy inverse reinforcement learning (IRL) problem for infinite-horizon stationary mean-field games (MFG), in which we model the unknown reward function within a reproducing kernel Hilbert space (RKHS). This allows the inference of rich and potentially nonlinear reward structures directly from expert demonstrations, in contrast to most existing approaches for MFGs that typically restrict the reward to a linear combination of a fixed finite set of basis functions and rely on finite-horizon formulations. We introduce a Lagrangian relaxation that enables us to reformulate the problem as an unconstrained log-likelihood maximization and obtain a solution via a gradient ascent algorithm. To establish the theoretical consistency of the algorithm, we prove the smoothness of the log-likelihood objective through the Fr\'echet differentiability of the related soft Bellman operators with respect to the parameters in the RKHS. To illustrate the practical advantages of the RKHS formulation, we validate our framework on a mean-field traffic routing game exhibiting state-dependent preference reversal, where the kernel-based method reduces policy recovery error by over an order of magnitude compared to a linear reward baseline with a comparable parameter count. Furthermore, we extend the framework to the finite-horizon non-stationary setting. We demonstrate that the log-likelihood reformulation is structurally unavailable in this regime and instead develop an alternative gradient descent algorithm on the convex dual via Danskin's theorem, establishing smoothness and convergence guarantees.}

\end{abstract}

\noindent\textbf{Keywords}: Discounted reward, inverse reinforcement learning, maximum causal entropy, mean-field games, reproducing kernel Hilbert spaces.

\section{Introduction}
\label{intro}

Mean-field games (MFGs) provide a framework for analyzing strategic interactions in large populations of agents, where the agents influence each other through a mean-field term that captures the aggregate distribution of the population's states. The infinite-population limit is typically considered \cite{HuMaCa06, LaLi07, BeFrPh13, CaDe13}, as it yields a tractable characterization of equilibria and serves as an accurate approximation for systems with many agents. This approach has diverse applications such as crowd motion \cite{7874135}, opinion dynamics \cite{BaTeBa2016}, and traffic routing \cite{9061051}.

In stationary MFGs, the collective behavior of other agents \cite{WeBeRo05} is characterized through a time-invariant distribution that leads to a Markov Decision Process (MDP) constrained by the state's stationary distribution. The equilibrium concept in this case referred to as the stationary mean-field equilibrium (MFE), involves a policy and a distribution satisfying the \lk{Nash certainty equivalence principle} \cite{HuMaCa06}, where the policy is optimal with respect to a fixed stationary distribution representing the population behavior, and when adopted by a representative agent, it induces a stationary state distribution that coincides with the assumed mean-field distribution. Under mild regularity conditions, the existence of a stationary MFE can be established via Kakutani's fixed point theorem. Moreover, in the limit of a large number of agents, the policy corresponding to a stationary MFE serves as an approximate Nash equilibrium \lk{for the corresponding finite-agent game} \cite{AdJoWe15}. Recent years have seen increasing interest in learning algorithms for stationary MFGs. For instance, \cite{YiMeMeSh14} introduced a learning method for mean-field oscillator games, proving convergence to an approximate Nash equilibrium.

Traditional MFG approaches compute the MFE when well-defined reward functions are provided, often employing forward reinforcement learning (RL) techniques \cite{LaPePeGiMuElGePi24}. \tb{
In many practically relevant MFG applications, however, this assumption is unrealistic: agent objectives are unobservable, heterogeneous, and too complex to be specified explicitly.
A canonical example is traffic routing \cite{9061051}, where drivers' route choices reflect individual trade-offs among travel time, fuel consumption, tolls, route familiarity, and risk preferences. These factors vary across the population and cannot be reliably elicited, yet the resulting traffic flow often converges to an observable equilibrium. 
In such settings, IRL enables inference of latent reward structures directly from observed equilibrium behavior. Standard RL formulations may be inadequate because they presuppose a known reward function, which is precisely the unknown quantity of interest. By recovering implicit objectives consistent with observed MFEs, IRL facilitates equilibrium prediction and control design without explicit reward modeling, and improves generalization beyond the observed data \cite{AdCoBe22}.
}



There has been recent progress in applying IRL methods for finding equilibria in multi-agent games \cite{KOPF201714902,LIAN2022110524,9815022}. Several papers also address the IRL problem within the context of MFGs. In \cite{YaYeTrXuZh18}, the authors reduce an MFG to an MDP in a fully-cooperative setting where all agents share the same societal reward, and employ the principle of maximum entropy to solve the corresponding IRL problem. In the case of a decentralized information structure and a non-cooperative setting, \cite{YaLiLiHu22} formulates the IRL problem for MFGs and considers a maximum-margin approach. 
\tb{
In our non-cooperative formulation, the mean-field equilibrium $(\pi_E, \mu_E)$ represents a Nash equilibrium in the infinite-population limit: no individual agent can improve its expected reward by unilaterally deviating from $\pi_E$ when the population distribution is $\mu_E$. The IRL problem then seeks to recover the individual reward function $r$ that rationalizes this observed equilibrium behavior.} \cite{ChZhLiWi23} proposes a mean-field adversarial IRL method that assumes expert demonstrations are generated from an entropy-regularized MFE, integrating concepts from decentralized IRL for MFGs, maximum entropy IRL, and generative adversarial learning. The uniqueness of the resulting MFE is established in their approach through a variational formulation aligned with the maximum likelihood principle. 
\tb{
More recently, \cite{chen2024meta} develops a meta-IRL framework for MFGs using probabilistic context variables to manage heterogeneous tasks, highlighting IRL's scalability in multi-agent environments. However, these MFG-IRL approaches \cite{YaYeTrXuZh18, YaLiLiHu22, ChZhLiWi23, chen2024meta} are limited to finite-horizon formulations, which yield convex optimization problems, and rely on either the classical maximum entropy principle or maximum-margin methods with linear reward parameterizations.}

\tb{
Beyond the MFG-specific literature, IRL has been studied in multi-agent settings more broadly. In Markov games, \cite{LinAB19} develops IRL methods based on inverse Nash equilibrium computation for finite-player settings, though their framework addresses single-agent MDPs rather than the mean-field regime. \cite{chandra2025}  extends maximum entropy IRL to unstructured interactive crowds but does not exploit mean-field structure for scalability. These approaches do not extend naturally to the large-population regime where MFG approximations become essential.}
\tb{
Separately, kernel methods have been employed in forward RL to provide flexible function approximation with theoretical guarantees: \cite{TaylorP09} introduces kernel-based value function approximation in MDPs, and \cite{vakili2024kernelbased} develops RKHS methods with order-optimal regret bounds. 
Our work draws on the representational power of kernel methods while addressing the inverse problem in the MFG setting.}


\tb{The classical maximum entropy framework, as employed in \cite{ChZhLiWi23}, is generally inapplicable in infinite-horizon settings, since the distribution over trajectories induced by the state-action process becomes ill-defined on the path space. To overcome a similar issue in infinite-horizon MDPs, \cite{ZhBlBa18} proposes the maximum causal entropy principle, which extends the maximum entropy framework by ensuring well-defined trajectory distributions through causality constraints.} \tv{Our prior work \cite{AnKaSa25} investigates the maximum causal entropy IRL problem for discrete-time stationary MFGs, extending the framework in \cite{ZhBlBa18} by modeling the unknown reward function as a linear combination of fixed basis functions that leads to a nonconvex optimization problem over policies, reformulate this problem as a convex optimization over state-action occupation measures by leveraging the linear programming characterization of MDPs, and develop a gradient descent algorithm with guaranteed convergence.}

\tb{While \cite{LinAB19} and our prior work \cite{AnKaSa25} do address infinite-horizon settings, and while kernel-based methods \cite{TaylorP09}, \cite{vakili2024kernelbased} and log-likelihood formulations \cite{GlTo22, ZiBaDe10, ZiBaDe13} remain restricted to finite-horizon MDPs, our contribution advances along four distinct axes: (i) stationary mean-field coupling: we incorporate population-level equilibrium constraints through a fixed-point condition on the stationary mean-field distribution, which is absent in single-agent formulations such as \cite{ZhBlBa18}; (ii) log-likelihood formulation: this perspective is absent in \cite{LinAB19,ZhBlBa18,AnKaSa25}, yet it may carry important practical implications given the extensive literature on maximum log-likelihood methods; (iii) gradient-based optimization under RKHS rewards: modeling the reward function within a reproducing kernel Hilbert space, we enable inference of complex nonlinear reward structures, moving beyond the linear parameterizations in \cite{AnKaSa25} and requiring a fundamentally different optimization approach via Lagrangian relaxation rather than convex programming; and (iv) extension to the non-stationary finite-horizon regime: we prove that the log-likelihood reformulation is structurally unavailable in this setting (Theorem 6.1) and develop an alternative convex dual minimization via Danskin's theorem, with $L$-smoothness (Proposition 6.2) and convergence guarantees (Theorem 6.2).}

\tb{The infinite-horizon stationary formulation is particularly well suited for problems characterized by long-run equilibrium behavior, since it induces time-invariant Bellman equations that are computationally tractable and analytically convenient.}
Extending to this regime requires proving new regularity results, namely the  Fr\'{e}chet differentiability of soft Bellman operators (Theorem~4.1) and $L$-smoothness of the objective (Proposition~5.1), which constitute our main technical contributions and have no analogues in the finite-horizon or linear-reward settings. 

Finally, we validate our method on a mean-field traffic routing game exhibiting state-dependent preference reversal, demonstrating that the kernel-based method significantly reduces policy recovery error compared to a linear reward baseline with a comparable parameter count.

\section{Preliminaries}\label{me-irl-mfg}
A discrete-time stationary MFG is defined by \((\sX, \sA, p, r, \beta)\), where \(\sX\) and \(\sA\) are finite state and action spaces, \(p: \sX \times \sA \times \Pnew(\sX) \to \Pnew(\sX)\) is the continuous transition probability, \(r: \sX \times \sA \times \Pnew(\sX) \to [0,\infty)\) is the continuous one-stage reward function, and \(\beta \in (0,1)\) is the discount factor. A state-measure \(\mu \in \Pnew(\sX)\) represents the population's stationary distribution, assumed to be constant across time. Given the state \(x(t)\), the action \(a(t)\), and the state-measure \(\mu\), the agent receives the reward \(r(x(t),a(t),\mu)\), and the next state evolves as \(x(t+1) \sim p(\cdot|x(t),a(t),\mu)\).


To fully describe the model dynamics, we need to specify how the agent chooses its actions. To that end, a policy $\pi$ is a conditional distribution on $\sA$ given $\sX$; that is, $\pi: \sX \rightarrow \Pnew(\sA)$. Let $\Pi$ denote the set of all policies. 


For a fixed $\mu$, the infinite-horizon discounted reward function of any policy $\pi$ is given by
\begin{align}
J_{\mu}(\pi,\mu_0) &= E^{\pi,\mu_0}\biggl[ \sum_{t=0}^{\infty} \beta^t r(x(t),a(t),\mu) \biggr], \nonumber 
\end{align}
where $\beta \in (0,1)$ is the discount factor and $x(0) \sim \mu_0$ is the initial state distribution.

\tb{
In the stationary MFG setting considered throughout this paper, we assume \(\mu_0=\mu\), so that the system is initialized at the stationary distribution. Consequently, the state process is stationary from \(t=0\), and \(E^{\pi,\mu}\) denotes expectation under this steady-state regime.
}

To formally define the concept of equilibrium in this MFG model, we introduce two set-valued mappings. Let $2^{S}$ denote the collection of all subsets of a set $S$. Then, the mapping $\Psi : \Pnew(\sX) \rightarrow 2^{\Pi}$ defined by
$$\Psi(\mu) = \{\hat{\pi} \in \Pi: J_{\mu}(\hat{\pi},\mu) = \sup_{\pi} J_{\mu}(\pi,\mu) \}$$
represents the optimal policies for a specified $\mu$.
On the other hand, $\Lambda : \Pi \to 2^{\Pnew(\sX)}$ maps any policy $\pi \in \Pi$ to the set of all state-measures $\mu_{\pi}$ that are invariant distributions of the transition probability $p(\,\cdot\,|x,\pi,\mu_{\pi})$. That is, $\mu_{\pi} \in \Lambda(\pi)$ if
\begin{align}
\mu_{\pi}(\,\cdot\,) = \sum_{x \in \sX} p(\,\cdot\,|x,a,\mu_{\pi}) \, \pi(a|x) \, \mu_{\pi}(x). \nonumber
\end{align}

\begin{definition}
A pair $(\pi_*,\mu_*) \in \Pi \times \Pnew(\sX)$ is called a mean-field equilibrium if $\pi_* \in \Psi(\mu_*)$ and $\mu_* \in \Lambda(\pi_*)$. That is, $\pi^*$ is  optimal with respect to the population distribution $\mu^*$, and $\mu^*$ remains invariant under the policy $\pi^*$.
\end{definition}

The core objective of IRL is to infer an underlying reward function from observed expert demonstrations enabling the derivation of robust policies that mimic or generalize expert behavior. To address the inherent ill-posedness of this inverse problem, it is essential to impose structural constraints on the space of admissible reward functions.

In this work, we assume the unknown (latent) reward function resides in a separable RKHS $\cH \subset \R^{\sZ}$, which is induced by a positive semi-definite kernel
$$
k: \sZ \times \sZ \rightarrow \R,
$$
where $\sZ \triangleq \sX \times \sA \times \Pnew(\sX)$. To simplify the notation, we define the associated feature map $\Phi$ by 
$$
\Phi: \sZ \ni z \mapsto \Phi(z) \triangleq k(\cdot,z) \in \cH.
$$
Consequently, the RKHS $\mathcal{H}$ can be characterized as the closure of the linear span of the feature evaluations. That is,
$$
\cH = \cl \cspan\{\Phi(z): z \in \sZ\},
$$
where the closure is taken with respect to the norm topology induced by the inner product
$$
\left \langle \sum_{i=1}^n \alpha_i \, \Phi(z_i), \sum_{j=1}^m \gamma_j \, \Phi(y_i) \right \rangle_{\cH} \triangleq \sum_{i=1,j=1}^{n,m} \alpha_i \, \gamma_j \, k(z_i,y_j).
$$
This inner product satisfies the fundamental reproducing property: for any function $f \in \mathcal{H}$ and any point $z \in \sZ$,
$$
f(z) = \langle f, \Phi(z) \rangle_{\cH}.
$$
A direct consequence of this property is the identity
$$
k(z,y) = \langle \Phi(z),\Phi(y) \rangle_{\cH},
$$
which reveals that the kernel $k$ computes the inner product between feature mappings in $\mathcal{H}$, often called the \textit{kernel trick}.

We assume the unknown reward function $r$ is an element of $\mathcal{H}$. Therefore, it can be represented (or arbitrarily well approximated) in the form
\[
r(\cdot) = \sum_{i=1}^n \alpha_i \, \Phi(z_i).
\]
Applying the reproducing property to this representation yields an equivalent, explicit form for evaluating the reward 
\[
r(z) = \sum_{i=1}^n \alpha_i \, \langle \Phi(z), \Phi(z_i) \rangle_{\mathcal{H}}.
\]
This formulation provides a linear parameterization of the unknown reward function within the feature space defined by $\Phi$. For further details on the fundamentals of RKHS theory, we refer the reader to the comprehensive introduction presented in \cite{PaRa16}.

In the IRL setting,  the expert provides a dataset $\mathcal{D}$ consisting of $d$ independent trajectories
$$
\D=\left\{\left(x_i(t),a_i(t)\right)_{t=0}^{T_i}\right\}_{i=1}^d,
$$
where each trajectory is generated according to a mean-field equilibrium $(\pi_E, \mu_E)$. The variable $T_i$ denotes the length of the $i$-th trajectory. Since $\mu_E$ is an invariant distribution of the transition probability $p(\cdot|x,\pi_E,\mu_E)$ under policy $\pi_E$ when the mean-field term in state dynamics is $\mu_E$, standard ergodic theory implies that, under mild conditions (e.g., irreducibility and aperiodicity), the empirical state visitation frequency converges almost surely to the stationary distribution. That is,
$$
\lim_{T_i\rightarrow \infty} \frac{1}{T_i} \sum_{t=0}^{T_i}  1_{\{x_i(t)=x\}}  = \mu_E(x) ~~\forall x \in \sX.
$$
Consequently, for a sufficiently large horizon $T_i$, the expert's mean-field distribution $\mu_E$ can be empirically approximated by averaging across all trajectories
$$
\frac{1}{d}  \sum_{i=1}^d \left(\frac{1}{T_i} \sum_{t=0}^{T_i}  1_{\{x_i(t)=x\}} \right) \simeq \mu_E(x) ~~ \forall x \in \sX.
$$
Furthermore, for a sufficiently large number of demonstrations $d$, the discounted feature expectation can be empirically approximated as
\begin{equation*}
\frac{1}{d} \sum_{i=1}^d \left( \sum_{t=0}^{T_i} \beta^t \, \Phi(x_i(t),a_i(t),\hat \mu_E) \right) \simeq \langle \Phi \rangle_{\pi_E,\mu_E},
\end{equation*}
where $\hat{\mu}_E$ is the empirical estimate of $\mu_E$ and the true feature expectation is defined, in the Bochner integral sense, by
$$\langle \Phi \rangle_{\pi_E,\mu_E} \coloneqq E^{\pi_E,\mu_E}\left[\sum_{t=0}^{\infty} \beta^t \,\Phi(x(t),a(t),\mu_E)\right] \in \cH.$$
Based on this empirical convergence, we adopt the following standard assumption for the remainder of this paper, which is common in the IRL literature.

\begin{assumption}
The following quantities are known: (i) the mean-field distribution $\mu_E$, and (ii) the discounted feature expectation $\langle \Phi \rangle_{\pi_E, \mu_E}$ under the expert policy $\pi_E$.
\end{assumption}

\tb{\begin{remark}
The mean-field distribution $\mu_E$ represents aggregate population behavior, which can be directly estimated from expert trajectory data. By the ergodicity arguments above, empirical state frequencies converge to $\mu_E$ as the trajectory length or the number of demonstrations increases. Regarding the existence of MFE, since $\mathsf{X}$ and $\mathsf{A}$ are finite and the transition probability $p$ and reward function $r$ are continuous in the mean-field argument, existence of a stationary MFE follows from Kakutani's fixed point theorem~\cite{AdJoWe15}. Our framework does not require uniqueness---we only require that expert demonstrations arise from some MFE. Finally, we note that for finite samples, estimated quantities $\hat{\mu}_E$ and $\langle \hat{\Phi} \rangle_{\pi_E,\mu_E}$ incur estimation error. The $L$-smoothness established in Proposition~5.1 ensures the objective varies smoothly under such perturbations, providing robustness. Formal finite-sample analysis, incorporating concentration bounds, is an important direction for future work.
\end{remark}}





\section{Maximum Causal Entropy IRL Problem}

This section formulates the maximum causal entropy inverse reinforcement learning problem and presents an equivalent optimization problem, which provides the foundation for applying Lagrangian relaxation in the subsequent analysis.


In the standard IRL framework, we assume the expert's behaviour is generated by a MFE $(\pi_E,\mu_E)$ under some unknown reward function $r_E$ in the RKHS $\cH$, and the equilibrium is characterized by two conditions: (i) the expert's policy $\pi_E$ is optimal for the mean-field $\mu_E$ under the reward $r_E$, and (ii) $\mu_E$ is the stationary distribution induced by $\pi_E$ with the mean-field term fixed at $\mu_E$. 

However, the inverse problem is inherently ill-posed as multiple distinct reward functions within $\cH$ may explain identical expert demonstrations. To resolve this ambiguity, we adopt the maximum causal entropy principle, which regularizes the problem by selecting the policy with the highest causal entropy from the set of all policies that are consistent with the expert's feature expectations, thereby introducing a minimally informative prior bias. The discounted causal entropy of a policy $\pi$ is defined\footnote{\tb{ The optimal policy derived via the soft Bellman optimality equations (Section IV) yields a softmax distribution that is strictly positive for all actions, ensuring the entropy is well-defined and finite.}
} as
$$
H(\pi) = \sum_{t=0}^{\infty} \beta^t E^{\pi,\mu_E} \left[-\log \, \pi(a(t)|x(t)) \right].
$$

Building on this, we formulate the kernel-based maximum discounted causal entropy IRL problem as the following constrained optimization:
\begin{align*}
&\mathbf{(OPT_1)} \ \text{maximize}_{\pi \in \mathcal{P}(\mathcal{A}|\sX)} \ H(\pi) ~~ \text{subject to:} \\
&\sum_{(y,a) \in \sX \times \mathcal{A}} p(x|y,a,\mu_E) \pi(a|y) \mu_E(y) = \mu_E(x) ~~ \forall x \in \sX, \\
&\hspace{1.25em}\sum_{t=0}^{\infty} \beta^t E^{\pi,\mu_E}[\Phi(x(t),a(t),\mu_E)] = \langle \Phi \rangle_{\pi_E,\mu_E},
\end{align*}
where, $\Pnew(\sA|\sX)$ is the set of stochastic kernels from $\sX$ to $\sA$, and the expectation in the last constraint is taken in the Bochner integral sense. 

The optimal solution of \(\mathbf{(OPT_1)}\), together with the mean-field term \(\mu_E\), constitutes an equilibrium.

\begin{proposition} \label{pr2}
Let $\pi^*$ be the solution of $\mathbf{(OPT_1)}$. Then, the pair $(\pi^*,\mu_E)$ is a mean-field equilibrium.
\end{proposition}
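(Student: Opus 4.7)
The plan is to verify the two defining conditions of a mean-field equilibrium for the pair $(\pi^*,\mu_E)$. The invariance condition $\mu_E \in \Lambda(\pi^*)$ is immediate: the first constraint in $\mathbf{(OPT_1)}$ is literally the statement that $\mu_E$ is a stationary distribution of the transition kernel $p(\cdot|x,a,\mu_E)$ under $\pi^*$, i.e., $\mu_E \in \Lambda(\pi^*)$. So the real content is the optimality condition $\pi^* \in \Psi(\mu_E)$.

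For this, I would exploit the feature expectation constraint together with the reproducing property of $\cH$. Recall that the expert data is generated by an MFE $(\pi_E,\mu_E)$ under some unknown reward $r_E \in \cH$, so in particular $\pi_E \in \Psi(\mu_E)$, i.e., $\pi_E$ maximizes $J_{\mu_E}(\,\cdot\,,\mu_E)$ when the one-stage reward is $r_E$. Using the reproducing identity $r_E(z) = \langle r_E, \Phi(z) \rangle_{\cH}$ and linearity of the Bochner integral against the bounded linear functional $\langle r_E, \,\cdot\,\rangle_{\cH}$, I would write
$$
J_{\mu_E}(\pi^*,\mu_E) = \left\langle r_E, \sum_{t=0}^{\infty}\beta^{t}\, E^{\pi^*,\mu_E}\bigl[\Phi(x(t),a(t),\mu_E)\bigr]\right\rangle_{\cH}.
$$
The feature expectation constraint of $\mathbf{(OPT_1)}$ lets me replace the vector inside the inner product by $\langle \Phi \rangle_{\pi_E,\mu_E}$, and then the same reproducing-property computation applied in reverse yields $J_{\mu_E}(\pi^*,\mu_E) = J_{\mu_E}(\pi_E,\mu_E)$.

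Combining this equality with the optimality of $\pi_E$ gives $J_{\mu_E}(\pi^*,\mu_E) = \sup_{\pi} J_{\mu_E}(\pi,\mu_E)$, hence $\pi^* \in \Psi(\mu_E)$, which completes the proof. The main technical subtlety I expect is justifying the interchange of the Bochner integral, the infinite discounted sum, and the inner product against $r_E$; this is standard and follows from $\beta \in (0,1)$ together with the boundedness of $\Phi$ on the compact set $\sZ = \sX\times\sA\times\Pnew(\sX)$ (finite state and action spaces, $\Pnew(\sX)$ compact), which gives an absolutely convergent series in $\cH$ and thus permits Fubini-type exchanges. Notably, the argument never needs to identify $r_E$ explicitly; it only uses that $r_E$ lies in $\cH$ and that the expert is optimal with respect to it.
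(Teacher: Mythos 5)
Your proposal is correct and follows essentially the same route as the paper's own proof: invariance from the first constraint, and optimality by expressing $J_{\mu_E}(\pi^*,\mu_E)$ as $\langle r_E, \langle \Phi \rangle_{\pi_E,\mu_E}\rangle_{\cH}$ via the reproducing property and the feature expectation constraint, then appealing to the optimality of $\pi_E$. The only difference is that you make explicit the Fubini-type justification for interchanging the discounted sum, the Bochner expectation, and the inner product, which the paper leaves implicit.
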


\begin{proof}
To establish that $(\pi^*, \mu_E)$ constitutes a MFE, we must verify that $\mu_E \in \Lambda(\pi^*)$ and $\pi^*$ is optimal with respect to $\mu_E$ (i.e., $\pi^* \in \Psi(\mu_E)$ ).
The first constraint in (OPT1),
\begin{equation*}
\mu_E(x) = \sum_{(a,y) \in \A \times \sX} p(x | y, a, \mu_E) \pi^*(a|y) \mu_E(y) ~~ \forall x \in \sX,
\end{equation*}
ensures that $\mu_E$ is invariant under the dynamics induced by the policy $\pi^*$ \lk{when the mean-field term is fixed as $\mu_E$.} This implies $\mu_E \in \Lambda(\pi^*)$ by definition.

Let $r_E \in \mathcal{H}$ denote the true, unknown expert reward function corresponding to the \lk{MFE} $(\pi_E, \mu_E)$. The equilibrium conditions $\mu_E \in \Lambda(\pi_E)$ and $\pi_E \in \Psi(\mu_E)$ implies the expert policy is optimal for its own mean-field. That is,
\[
J_{\mu_E}(\pi_E, \mu_E) = \sup_{\pi \in \Pi} J_{\mu_E}(\pi, \mu_E).
\]
Furthermore, by the reproducing property of the Hilbert space $\cH$ and the definition of the feature expectation $\langle \Phi \rangle_{\pi_E,\mu_E}$, the expert's value can be expressed as the inner product
$$J_{\mu_E}(\pi_E, \mu_E) = \langle r_E, \langle \Phi \rangle_{\pi_E,\mu_E} \rangle_{\cH}.$$ 
The second constraint in (OPT1) enforces matching feature expectations
$$
\sum_{t=0}^{\infty} \beta^t \, E^{\pi^*,\mu_E}[\Phi(x(t),a(t),\mu_E)] = \langle \Phi \rangle_{\pi_E,\mu_E}.
$$
This directly implies $J_{\mu_E}(\pi^*, \mu_E) = \langle r_E, \langle \Phi \rangle_{\pi_E,\mu_E} \rangle_{\cH}$. By the definition of the expert equilibrium, the right-hand side equals the supremum $\sup_{\pi \in \Pi} J_{\mu_E}(\pi, \mu_E)$. Hence,
\[
J_{\mu_E}(\pi^*, \mu_E) = \sup_{\pi \in \Pi} J_{\mu_E}(\pi, \mu_E),
\]
proving that $\pi^* \in \Psi(\mu_E)$.
\end{proof}

\tb{ To align the formulation with the discounted infinite-horizon setting, we replace the stationarity constraint on the state distribution with an equivalent constraint on discounted state-occupancy frequencies. Specifically, replacing
}
$$\mu_E(x) = \sum_{(a,y) \in \sA \times \sX} p(x|y,a,\mu_E) \, \pi(a|y) \, \mu_E(y) \,\, \forall x \in \sX$$ 
in \(\mathbf{(OPT_1)}\) with
$$
\sum_{t=0}^{\infty} \beta^t \, E^{\pi,\mu_E} [1_{\{x(t)=x\}}] = \mu_E(x)/(1-\beta)  \,\, \forall x \in \sX,
$$
\tb{ leads to the following alternative formulation. In this form, the discounted occupancy constraint is naturally incorporated via Lagrangian relaxation and admits an entropy-regularized MDP interpretation.
}

\renewcommand\arraystretch{1.5}
\[
\begin{array}{ll}
\mathbf{(\widehat{OPT_1})} 
& \text{maximize}_{\pi \in \Pnew(\sA|\sX)} \ H(\pi)  ~~  \text{subject to:} \\[2pt]
& \displaystyle \sum_{t=0}^{\infty} \beta^t \, E^{\pi,\mu_E} \big[ \mathbf{1}_{\{x(t)=x\}} \big] 
    = \frac{\mu_E(x)}{1-\beta}, \quad \forall x \in \sX, \\[6pt]
& \displaystyle \sum_{t=0}^{\infty} \beta^t \, E^{\pi,\mu_E} \big[ \Phi(x(t),a(t),\mu_E) \big] 
    = \langle \Phi \rangle_{\pi_E,\mu_E}.
\end{array}
\]


\tb{The following result formalizes the relationship between the two formulations.}

\begin{proposition} \label{pr3}
	$\mathbf{(\widehat{OPT_1})}$ and $\mathbf{(OPT_1)}$ are equivalent. 
\end{proposition}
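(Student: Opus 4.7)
The plan is to show that the only difference between the two programs lies in the first constraint (the objective $H(\pi)$ and the feature expectation constraint are literally identical), and then prove that these two constraints are equivalent under the common initial distribution $x(0)\sim\mu_E$. To do this, I would introduce the notation $d_t(x)=E^{\pi,\mu_E}[1_{\{x(t)=x\}}]$ so that $d_0=\mu_E$ and $d_{t+1}=P_\pi d_t$, where $P_\pi$ is the transition operator on $\Pnew(\sX)$ defined by $(P_\pi \nu)(x)=\sum_{(a,y)}p(x|y,a,\mu_E)\pi(a|y)\nu(y)$.

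For the forward direction, I would assume the constraint in $\mathbf{(OPT_1)}$, which is precisely $P_\pi \mu_E=\mu_E$. By induction this yields $d_t=\mu_E$ for every $t\ge 0$, and then
\begin{equation*}
\sum_{t=0}^{\infty}\beta^t d_t(x)=\mu_E(x)\sum_{t=0}^{\infty}\beta^t=\frac{\mu_E(x)}{1-\beta},
\end{equation*}
recovering the constraint of $\mathbf{(\widehat{OPT_1})}$.

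For the converse, I would rewrite the constraint of $\mathbf{(\widehat{OPT_1})}$ using $d_t=P_\pi^t\mu_E$ as the resolvent identity $(I-\beta P_\pi)^{-1}\mu_E=\mu_E/(1-\beta)$, which is well-defined since $\beta\in(0,1)$ and $P_\pi$ is a stochastic operator (so its spectral radius is at most one). Multiplying both sides by $(I-\beta P_\pi)$ gives $(1-\beta)\mu_E=\mu_E-\beta P_\pi\mu_E$, and rearranging yields $P_\pi\mu_E=\mu_E$, which is exactly the constraint of $\mathbf{(OPT_1)}$. Since the feasible sets coincide and the objectives are identical, the two problems have the same optimal value and the same optimizers.

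The only subtle point, and what I would treat as the main obstacle, is justifying the interchange of the summation and the operator when passing to the resolvent form, i.e.\ that $\sum_{t=0}^{\infty}\beta^t P_\pi^t \mu_E=(I-\beta P_\pi)^{-1}\mu_E$. Because $\sX$ is finite and $P_\pi$ has operator norm one with respect to the $\ell_1$ norm, the Neumann series converges absolutely, so the manipulation is routine; I would state this briefly rather than expand it. Everything else is a direct comparison of constraints, so the proof should be short.
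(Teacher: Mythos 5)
Your proposal is correct and follows essentially the same route as the paper: the forward direction is the identical induction giving $d_t=\mu_E$ for all $t$, and your resolvent manipulation $(I-\beta P_\pi)^{-1}\mu_E=\mu_E/(1-\beta)\Rightarrow P_\pi\mu_E=\mu_E$ is exactly the paper's appeal to the Bellman flow condition for the state occupation measure, just written out via the Neumann series. No gaps; the convergence remark you flag is indeed routine on a finite state space.
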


\begin{proof}
	Suppose $\mu_E$ is stationary for the policy $\pi$ and mean-field $\mu_E$, meaning
	$$
	\mu_E(x) = \sum_{(a,y) \in \sA \times \sX} p(x|y,a,\mu_E) \, \pi(a|y) \, \mu_E(y) ~~ \forall x \in \sX.
	$$
	Then, the state process ${x(t)}$ is stationary with $\Law{x(t)} = \mu_E$ for all $t \geq 0$. Consequently, the discounted state occupancy is
	$$
	\sum_{t=0}^{\infty} \beta^t \, E^{\pi,\mu_E} [1_{\{x(t)=x\}}] = \sum_{t=0}^{\infty} \beta^t \, \mu_E(x) = \mu_E(x)/(1-\beta)
    $$
	for every $x \in \sX$. Conversely, suppose the discounted occupancy matches the stationary distribution
	$$
	\sum_{t=0}^{\infty} \beta^t \, E^{\pi,\mu_E} [1_{\{x(t)=x\}}] =   \mu_E(x)/(1-\beta)  ~~ \forall x \in \sX.
	$$
    Let $\nu_{\pi}^{\mathcal{X}}$ be the normalized state occupation measure, defined by
    $$
	\nu_{\pi}(x,a) \coloneqq (1-\beta) \, \sum_{t=0}^{\infty} \beta^t \, E^{\pi,\mu_E} \left[ 1_{\{(x(t),a(t)) = (x,a)\}} \right].
	$$
    By assumption $\nu_{\pi}^{\sX}(x)=\mu_E(x)$ for all $x \in \sX$. Furthermore, $\nu_{\pi}^{\sX}$ satisfies the Bellman flow equation
$$
	\nu_{\pi}^{\sX}(x) = (1-\beta) \, \mu_E(x)  + \beta \, \sum_{(y,a) \in \sX\times\sA} p(x|y,a,\mu_E) \, \pi(a|y) \, \nu_{\pi}^{\sX}(y).$$
	
    Substituting $\nu_{\pi}^{\sX}(x)=\mu_E(x)$ into this equation and solving for $\mu_E(x)$ simplifies to the stationarity condition
    $$
	\mu_E(x) = \sum_{(a,y) \in \sA \times \sX} p(x|y,a,\mu_E) \, \pi(a|y) \, \mu_E(y) \,\, \forall x \in \sX. 
	$$
    This completes the proof.
\end{proof}

\tb{
\begin{remark}
We emphasize that the discounted \tb{reward} structure in our formulation  pertains to the weighting of future rewards, not to transient state 
dynamics. Under the stationarity assumption $\mu_0 = \mu_E$, the state  distribution is time-invariant, and the discount factor $\beta$ governs  the relative importance of near-term versus long-term rewards. The time-varying formulation with transient dynamics is treated  separately in Section~VI.
\end{remark}
}

\section{Lagrangian Relaxation and the Log-likelihood Formulation}

 

This section analyzes the Lagrangian relaxation of the optimization problem \(\mathbf{(OPT_1)}\) using its equivalent formulation $\mathbf{(\widehat{OPT_1})}$. This approach transforms the constrained IRL problem into a maximum likelihood estimation framework.


\tb{While Lagrangian relaxation for IRL has been studied in finite-horizon MDPs~\cite{GlTo22,ZiBaDe10,ZiBaDe13}, extending to infinite-horizon stationary MFGs introduces fundamental technical challenges. First, trajectory distributions become ill-defined on infinite path spaces, necessitating the causal entropy principle rather than classical maximum entropy. Second, unlike finite-horizon backward induction, the soft Bellman equation defines $Q^{\theta}$ as a fixed point of a contraction operator; differentiating through this fixed point requires proving Fr\'{e}chet differentiability via the implicit function theorem (Theorem~4.1). Third, establishing $L$-smoothness requires bounding infinite series through contraction arguments (Proposition~5.1), whereas finite-horizon objectives involve only finite sums. Finally, beyond standard MDPs, our MFG formulation couples policy optimization with population dynamics through the stationarity constraint, requiring the equivalent reformulation in Proposition~3.2. }

We introduce the Lagrange multiplier $\theta \triangleq (\lambda,h) \in \R^{\sX} \times \cH$ and define the Lagrangian relaxation associated with $\mathbf{(\widehat{OPT_1})}$ as
\begin{align*}
  \cG(\theta)  &\triangleq \max_{\pi \in \Pnew(\sA|\sX)} \cL(\pi,\theta) \\ 
  &\triangleq \max_{\pi \in \Pnew(\sA|\sX)}   H(\pi) + \left \langle \lambda, \sum_{t=0}^{\infty} \beta^t \, E^{\pi,\mu_E}[1_{\{x(t)=\cdot\}}] - \mu_E(\cdot) \right \rangle_{\R^{\sX}} 
 \\ & +\left \langle h, \sum_{t=0}^{\infty} \beta^t \, E^{\pi,\mu_E}[\Phi(x(t),a(t),\mu_E)] - \langle \Phi \rangle_{\pi_E,\mu_E} \right \rangle_{\cH}.
\end{align*}
Here, $\langle \cdot,\cdot \rangle_{\R^{\sX}}$ denotes the standard inner product in the Euclidean space $\R^{\sX}$ and $\langle \cdot,\cdot \rangle_{\cH}$ represents the inner product in the RKHS $\cH$. Then,
$$
\mathbf{(\widehat{OPT_1})} \leq \min_{\theta} \cG(\theta) \triangleq \cG(\theta^*).
$$
Since the terms $\langle \lambda, \mu_E \rangle_{\R^{\sX}}$ and $\langle h, \langle \Phi \rangle_{\pi_E,\mu_E} \rangle_{\cH}$ are independent of the policy $\pi$, \tb{they do not affect the maximization over $\pi$ and can be omitted from the inner optimization.} 
This leads to the simplified problem:

$$\max_{\pi \in \Pnew(\sA|\sX)} \text{ }  H(\pi) + \sum_{t=0}^{\infty} \beta^t \, E^{\pi,\mu_E}[\lambda(x(t))] + \sum_{t=0}^{\infty} \beta^t \, E^{\pi,\mu_E}[h(x(t),a(t),\mu_E)],$$
where 
\begin{align*}
\left \langle \lambda, \sum_{t=0}^{\infty} \beta^t \, E^{\pi,\mu_E}[1_{\{x(t)=\cdot\}}]\right \rangle_{\R^{\sX}} &= \sum_{t=0}^{\infty} \beta^t \, E^{\pi,\mu_E}[\lambda(x(t))]
\end{align*}
and
$$\left \langle h, \sum_{t=0}^{\infty} \beta^t \, E^{\pi,\mu_E}[\Phi(x(t),a(t),\mu_E)] \right \rangle_{\cH}  = \sum_{t=0}^{\infty} \beta^t \, E^{\pi,\mu_E}[h(x(t),a(t),\mu_E)].$$

In the last equality, we use the reproducing property of the feature function $\Phi$, that is, $\langle h,\Phi(x,a,\mu) \rangle_{\cH} = h(x,a,\mu)$. Note that the above problem is 
indeed an entropy regularized MDP with the reward function 
$$r_{\theta}(x,a) \triangleq \lambda(x) + h(x,a,\mu_E).$$ 
The solution to this problem is given by the following soft Bellman optimality equations:
\begin{align*}
Q^{\theta}(x,a) &= r_{\theta}(x,a) + \beta \, \sum_{y \in \sX} V^{\theta}(y) \, p(y|x,a,\mu_E) \\
V^{\theta}(x) &= \log \sum_{a \in \sA} e^{Q^{\theta}(x,a)} \triangleq \softmax_{a \in \sA} Q^{\theta}(x,a).
\end{align*}
Then, it follows that
$$
\pi^{\theta}(a|x) = e^{Q^{\theta}(x,a)-V^{\theta}(x)}
$$
is the optimal solution (see \cite{NeJoGo17}). Here, due to the additional entropy reward, we simply replace the $\max$-operator with $\softmax$-operator in the classical Bellman recursion. 

%

Directly solving the constraints of \( \mathbf{\widehat{(OPT_1)}} \) for the optimal parameter 
\( \theta^* \) is often computationally intractable.  Instead, we derive an alternative objective (surrogate) function whose stationary points yield \( \theta^* \), and we find these points via gradient ascent. This requires the Fr\'{e}chet differentiability of $Q^{\theta}$ and $V^{\theta}$ in $\theta$. Let  $\ee_x \in \R^{\sX}$ denote the standard basis vector defined by $\ee_x(y) = 1_{\{x=y\}}$. We then construct the vector-valued feature function
$$
f(x,a) \triangleq \begin{bmatrix}
\ee_x \\ \Phi(x,a,\mu_E) 
\end{bmatrix} \in \R^{\sX} \times \cH \triangleq \cal W.
$$


\begin{theorem}
Let $\cal W = \mathbb{R}^{\sX} \times \mathcal{H}$ be a Hilbert space endowed with the inner product
\[
\langle \theta_1, \theta_2 \rangle_{\cal W} \triangleq 
\langle \lambda_1, \lambda_2 \rangle_{\mathbb{R}^{\sX}} + \langle h_1, h_2 \rangle_{\mathcal{H}},
\]
where $\theta = (\lambda,h) \in \cal W$. Let $Q^{\theta}: \sX \times \mathcal{A} \to \mathbb{R}$ be the unique fixed point of the $\beta$-contraction operator
\[
T^{\theta} Q(x,a) \triangleq \langle \theta, f(x,a) \rangle_{\cal W} + \beta \sum_{y \in \sX} V(y) \, p(y|x,a,\mu_E),
\]
with
\[
V(y) \triangleq \softmax_{a \in \mathcal{A}} Q^{\theta}(y,a).
\]
Then, $Q^{\theta}$ and $V^{\theta}: \sX \to \mathbb{R}$ are Fr\'echet differentiable with respect to $\theta \in \cal W$.
\end{theorem}

\begin{proof}
Define the mapping
\[
F(Q,\theta) \triangleq Q - T^{\theta} Q,
\]
for $Q \in \mathbb{R}^{\sX \times \mathcal{A}}$ and $\theta \in \cal W$. By construction, $F(Q^{\theta},\theta) = 0$, and $Q^{\theta}$ is the unique solution of $F(Q,\theta) = 0$ due to the $\beta$-contraction property of $T^{\theta}$ under the sup-norm. 
Since $F(Q,\theta)$ is linear in $\theta$ and the softmax function
\[
\mathbb{R}^{\mathcal{A}} \ni l \mapsto \softmax_{a \in \mathcal{A}} l(a) \in \mathbb{R}
\]
is continuously differentiable, $F(Q,\theta)$ is continuously Fr\'echet differentiable with respect to $(Q,\theta) \in \mathbb{R}^{\sX \times \mathcal{A}} \times \cal W$. Moreover, the Jacobian of $F(\cdot,\theta): \mathbb{R}^{\sX \times \mathcal{A}} \to \mathbb{R}^{\sX \times \mathcal{A}}$ with respect to $Q$ is given by
\[
\nabla_Q F(Q,\theta) = I - \beta D_Q,
\]
where
\[
D_Q(x,a \mid y,b) = \pi^Q(b|y) \, p(y|x,a,\mu_E),\] and
\[\pi^Q(b|y) \triangleq \frac{e^{Q(y,b)}}{\sum_{a \in \mathcal{A}} e^{Q(y,a)}}.
\]
Since $D_Q$ is a transition matrix, $I - \beta D_Q$ is invertible for $\beta < 1$. By the implicit function theorem, the mapping $\cal W \ni \theta \mapsto Q^{\theta} \in \mathbb{R}^{\sX \times \mathcal{A}}$ is Fr\'echet differentiable.

Finally, the Fr\'echet differentiability of $V^{\theta}$ follows from that of $Q^{\theta}$, the continuous differentiability of the softmax function, and the chain rule, since $V^{\theta}(y) = \softmax_{a \in \mathcal{A}} Q^{\theta}(y,a)$.
\end{proof}

For any policy $\pi$, we define the un-normalized state-action occupation measure as 
$$
\gamma_{\pi}(x,a) \coloneqq \sum_{t=0}^{\infty} \beta^t \, E^{\pi,\mu_E} \left[ 1_{\{(x(t),a(t)) = (x,a)\}} \right].
$$
We consider the following objective function, whose stationary points will correspond to solutions of the IRL problem:
$$
\V(\theta) \triangleq \sum_{(x,a) \in (\sX\times\sA)} \log \pi^{\theta}(a|x) \, \gamma_{\pi_E}(x,a).
$$
The following theorem establishes the connection between the stationary points of $\V$ and the solutions of the original optimization problem.

\begin{theorem}\label{equivv3}
If $\theta^*$ satisfies $\nabla \V(\theta^*) = 0$, then
\begin{enumerate}[(i)]
\item $\theta^*$ minimizes the dual function: $\theta^* \in \argmin_{\theta \in \mathcal{W}} \cG(\theta).$
\item The induced policy $\pi^{\theta^*}$ is optimal for the primal problem: $\pi^{\theta^*} \in \argmax  \mathbf{\widehat{(OPT_1)}}$.
\end{enumerate}
\end{theorem}

\begin{proof}
We begin by expressing $\V(\theta)$ in terms of the advantage function as 
$$
\V(\theta) = E^{\pi_E,\mu_E} \left[\sum_{t=0}^{\infty} \beta^t \, \left(Q^{\theta}(x(t),a(t))-V^{\theta}(x(t))\right) \right].
$$
Note that
\begin{align}
\nabla r_{\theta}(x,a) = f(x,a). \label{neq1} 
\end{align}
Moreover, the gradient of the value function can be derived as 
\begin{align}
\nabla V^{\theta}(x) &= \nabla \log \sum_{a \in \sA} e^{Q^{\theta}(x,a)} \nonumber \\
&= \frac{1}{\sum_{a \in \sA} e^{Q^{\theta}(x,a)}} \sum_{a \in \sA} e^{Q^{\theta}(x,a)} \, \nabla Q^{\theta}(x,a) \nonumber \\
&= \sum_{a \in \sA} \nabla Q^{\theta}(x,a) \, \pi^{\theta}(a|x) \label{neq2}
\end{align}
where the policy is given by the softmax distribution
\begin{align*}
\pi^{\theta}(a|x) = \frac{e^{Q^{\theta}(x,a)}}{e^{V^{\theta}(x)}} = \frac{e^{Q^{\theta}(x,a)}}{\sum_{a \in \sA} e^{Q^{\theta}(x,a)}}.
\end{align*}
Furthermore, the Bellman equation implies that the gradient of the action-value function satisfies 
\begin{align}
\nabla Q^{\theta}(x,a) &= f(x,a) + \beta \, \sum_{y \in \sX} \nabla V^{\theta}(y) \, p(y|x,a,\mu_E). \label{neq3}
\end{align}
By recursively applying \eqref{neq2} within \eqref{neq3}, we obtain the following closed-form solution:

$$\nabla Q^{\theta}(x(0),a(0)) =  E^{\pi^{\theta}} \left[ \sum_{t=0}^{\infty} \beta^t f(x(t),a(t)) \bigg| x(0),a(0)\right]. \label{neq4}$$
This result follows by successively unrolling the recursion over an infinite horizon. Indeed,
\begin{align*}
\nabla Q^{\theta}(x(0),a(0))  &= f(x(0),a(0)) + \beta  \sum_{x(1) \in \sX} \nabla V^{\theta}(x(1)) \, p(x(1)|x(0),a(0),\mu_E)\\
&= f(x(0),a(0))  + \beta \! \sum_{x(1) \in \sX} \sum_{a(1) \in \sA} \nabla Q^{\theta}(x(1),a(1)) \, \pi_{\theta}(a(1)|x(1)) p(x(1)|x(0),a(0),\mu_E) \\
&= f(x(0),a(0)) + \beta \! \sum_{x(1) \in \sX} \sum_{a(1) \in \sA} \bigg[f(x(1),a(1)) \\ &\phantom{x} +\beta \sum_{x(2) \in \sX} \nabla V^{\theta}(x(2)) \, p(x(2)|x(1),a(1),\mu_E)\bigg] \pi_{\theta}(a(1)|x(1))\, p(x(1)|x(0),a(0),\mu_E) \\
&\phantom{x}\vdots \\
&= E^{\pi^{\theta}} \left[ \sum_{t=0}^{N-1} \beta^t f(x(t),a(t)) \bigg| x(0),a(0)\right] + \beta^N \, E^{\pi^{\theta}} \left[ \nabla V^{\theta}(x(N)) \bigg| x(0),a(0)\right] \\
&\rightarrow E^{\pi^{\theta}} \left[ \sum_{t=0}^{\infty} \beta^t f(x(t),a(t)) \bigg| x(0),a(0)\right] \,\, \text{as} \,\, N \rightarrow \infty.
\end{align*} 
Combining the results derived above, it follows that
\begin{align*}
&\nabla \V(\theta) = E^{\pi_E,\mu_E} \left[\sum_{t=0}^{\infty} \beta^t \, \left(\nabla Q^{\theta}(x(t),a(t))-\nabla V^{\theta}(x(t))\right) \right] \\
&\stackrel{(\text{by} \,\, (\ref{neq3}))}{=} E^{\pi_E,\mu_E} \bigg[\sum_{t=0}^{\infty} \beta^t \, \Big(f(x(t),a(t)) + \beta \! \! \sum_{y(t+1) \in \sX} \!\!\!\!\! \nabla V^{\theta}(y(t+1)) p(y(t+1)|x(t),a(t),\mu_E) -\nabla V^{\theta}(x(t)) \Big) \bigg] \\
&= \langle f \rangle_{\pi_E,\mu_E} + E^{\pi_E,\mu_E} \left[\sum_{t=1}^{\infty} \beta^t \, \nabla V^{\theta}(x(t)) \right] - E^{\pi_E,\mu_E} \left[\sum_{t=0}^{\infty} \beta^t \, \nabla V^{\theta}(x(t)) \right] \\
&= \langle f \rangle_{\pi_E,\mu_E}  - \sum_{x(0) \in \sX}  \nabla V^{\theta}(x(0)) \, \mu_E(x(0)) \\ &\stackrel{(\text{by} \,\, (\ref{neq2}))}{=} \langle f \rangle_{\pi_E,\mu_E}  - \sum_{(x(0),a(0)) \in \sX \times \sA}  \nabla Q^{\theta}(x(0),a(0)) \pi^{\theta}(a(0)|x(0)) \, \mu_E(x(0)) \\
&\stackrel{(\text{by} \,\, (\ref{neq4}))}{=} \langle f \rangle_{\pi_E,\mu_E}  - \sum_{(x(0),a(0)) \in \sX \times \sA}   E^{\pi^{\theta}} \left[ \sum_{t=0}^{\infty} \beta^t \, f(x(t),a(t)) \bigg| x(0),a(0)  \right] \pi^{\theta}(a(0)|x(0)) \, \mu_E(x(0)) \\
&= \langle f \rangle_{\pi_E,\mu_E}  -   E^{\pi^{\theta},\mu_E} \left[ \sum_{t=0}^{\infty} \beta^t f(x(t),a(t)) \right].
\end{align*}
Therefore, $\nabla \V(\theta^*) = 0$ if and only if 
$$
\langle f \rangle_{\pi_E,\mu_E}  =   E^{\pi^{\theta^*},\mu_E} \left[ \sum_{t=0}^{\infty} \beta^t f(x(t),a(t)) \right].
$$
On the other hand,
$$E^{\pi^{\theta},\mu_E} \left[ \sum_{t=0}^{\infty} \beta^t f(x(t),a(t)) \right] \\ = \begin{bmatrix}
E^{\pi^{\theta},\mu_E} \left[ \sum_{t=0}^{\infty} \beta^t 1_{\{x(t) = \cdot\}} \right] \\ E^{\pi^{\theta},\mu_E} \left[ \sum_{t=0}^{\infty} \beta^t \Phi(x(t),a(t),\mu_E) \right]
\end{bmatrix} \in \R^{\sX} \times \cH$$
and
\begin{align*}
\langle f \rangle_{\pi_E,\mu_E} &= \begin{bmatrix}
\mu_E \\ \langle \Phi \rangle_{\pi_E,\mu_E}
\end{bmatrix} \in \R^{\sX} \times \cH.
\end{align*}
Hence, $\nabla \V(\theta^*) = 0$ if and only if $\pi^{\theta^*}$ satisfies the constraints in $\mathbf{(\widehat{OPT_1})}$. 
Note that we have
\lk{\begin{align*}
\mathbf{(\widehat{OPT_1})} &= \max_{\pi} \min_{\theta} \cL(\pi,\theta) 
\leq \min_{\theta} \max_{\pi} \cL(\pi,\theta) = \min_{\theta} \cG(\theta) \\
&\leq \max_{\pi} \cL(\pi,\theta^*) = \cG(\theta^*) = \cL(\pi^{\theta^*}, \theta^*) \\
&\leq \mathbf{(\widehat{OPT_1})} \quad \text{(since $\pi^{\theta^*}$ is feasible for $\mathbf{(\widehat{OPT_1})}$).}
\end{align*}}
Therefore, if $\nabla \V(\theta^*) = 0$, then
\begin{align*}
\theta^* \in \argmin_{\theta} \cG(\theta), \,\, \pi^{\theta^*} \in \argmax  \mathbf{\widehat{(OPT_1)}}.
\end{align*}
This completes the proof.

\end{proof}

\section{\lk{Maximum Log-Likelihood Gradient Ascent Algorithm}}

We now propose a gradient ascent algorithm to locate a stationary point of the objective function $\mathcal{V}(\theta)$ and analyze its convergence properties. According to Theorem~\ref{equivv3}, solving $\mathbf{\widehat{(OPT_1)}}$ is equivalent to the following root-finding problem:
\[
\mathbf{(MLL)} \quad \text{Find } \theta^* \text{ such that } \nabla \mathcal{V}(\theta^*) = 0.
\]
This problem can be conceptualized as an instance of maximum log-likelihood estimation. To be able to apply a constant step-size gradient ascent algorithm for finding the stationary point of the function $\V(\theta)$, we need to establish that $\V$ is $L$-smooth for some $L>0$. Before addressing this, we first examine the structure of the gradient of $\V$.

Theorem~\ref{equivv3} establishes that the gradient of the objective function is given by
$$
\nabla \V(\theta) = \langle f \rangle_{\pi_E,\mu_E} - E^{\pi^{\theta},\mu_E} \left[ \sum_{t=0}^{\infty} \beta^t f(x(t),a(t)) \right].
$$
The expert's feature expectation vector $\langle f \rangle_{\pi_E,\mu_E}$ is known \textit{a priori} from the demonstration data and is defined as
$$
\langle f \rangle_{\pi_E,\mu_E} = \begin{bmatrix}
\mu_E \\ \langle \Phi \rangle_{\pi_E,\mu_E}
\end{bmatrix} \in \R^{\sX} \times \cH.
$$
Consequently, the computation of the gradient \( \nabla \V(\theta) \) reduces to evaluating the expected discounted sum of $f$ under the policy \( \pi^{\theta} \) and the mean-field distribution \( \mu_E \). 

This can be achieved using the unnormalized state-action occupation measure $\gamma_{\pi_\theta}$, as
\begin{align*}
E^{\pi^{\theta},\mu_E} \left[ \sum_{t=0}^{\infty} \beta^t f(x(t),a(t)) \right] 
= \sum_{(x,a) \in \sX\times\sA} f(x,a) \, \gamma_{\pi^{\theta}}(x,a).
\end{align*}
To compute this in practice, first obtain the soft value function $V^\theta$ as the unique fixed point of the $\beta$-contraction operator
\small
$$L^{\theta} V(x) \triangleq 
\max_{\pi \in \Pnew(\sA)} 
\sum_{a \in \sA} \bigg\{
r_{\theta}(x,a,\mu_E)  \\ 
+ \beta \sum_{y \in \sX} V(y) \, p(y|x,a,\mu_E) \bigg\} \pi(a) + H(\pi) .$$

\normalsize
Note that the application of $L^\theta$ is straightforward when the entropy term $H(\pi)$ is present; in its absence, solving for $L^\theta V$ can be computationally intractable.

By the variational formula, $V^\theta$ can be computed via value iteration: start with $V_0$ and iteratively update $V_{t+1} = L^\theta V_t$ for $t = 0, 1, \dots$, yielding $V_t \to V^\theta$. This maximization is explicit due to the entropy term
$$L^{\theta} V(x) =
\log \sum_{a \in \sA} 
e^{\,r_{\theta}(x,a,\mu_E) + \beta \sum_{y \in \sX} V(y) \, p(y|x,a,\mu_E)}.$$

Once $V^\theta$ is computed, we obtain
$$Q^{\theta}(x,a) 
= r_{\theta}(x,a,\mu_E)  
+ \beta \sum_{y \in \sX} V^{\theta}(y) \, p(y|x,a,\mu_E),$$
and the corresponding policy
\[
\pi^{\theta}(a|x) = e^{Q^{\theta}(x,a)-V^{\theta}(x)}.
\]
By the Bellman flow condition, the state-occupation measure $\gamma^X_{\pi_\theta}$ satisfies
\begin{align*}
\gamma_{\pi^{\theta}}^{\sX}(\cdot) 
= \mu_E(\cdot) 
+ \beta \sum_{(x,a) \in \sX\times\sA} p(\cdot|x,a,\mu_E) \, \pi^{\theta}(a|x) \, \gamma_{\pi^{\theta}}^{\sX}(x).
\end{align*}
This relationship is equivalent to the linear system
\[
\gamma_{\pi^{\theta}}^{\sX} = \mu_E + A^{\theta} \, \gamma_{\pi^{\theta}}^{\sX},
\]
where
\[
A^{\theta}(x,y) \triangleq \sum_{a \in \sA} p(y|x,a,\mu_E) \, \pi^{\theta}(a|x).
\]
Given that $A^\theta$ is a valid transition matrix, the matrix $(I - \beta A^{\theta})$ is invertible. Consequently, the state occupation measure is given explicitly by
\[
\gamma_{\pi^{\theta}}^{\sX} = (I - \beta \,A^{\theta})^{-1} \, \mu_E.
\]
The full state-action occupation measure is then obtained as
\[
\gamma_{\pi^{\theta}} = \gamma_{\pi^{\theta}}^{\sX} \otimes \pi^{\theta}.
\]
 Equipped with this closed-form expression for the occupation measure, we now introduce a maximum log-likelihood gradient ascent algorithm to locate a stationary point of $\mathcal{V}(\theta)$. 
 

\begin{algorithm}[H]
	\caption{\lk{Maximum Log-Likelihood Gradient Ascent}}
	\label{GD}
	\begin{algorithmic}[1]
\STATE \textbf{Input:} initial parameter $\btheta_0$, stepsize $\gamma > 0$, iterations $K$
\STATE Initialize $\btheta \leftarrow \btheta_0$
\FOR{$k = 0,\ldots,K-1$}
    \STATE $\btheta_{k+1} \leftarrow \btheta_k + \gamma \nabla \mathcal{V}(\btheta_k)$
\ENDFOR
\STATE \textbf{Return:} $\btheta_K$ and $\pi^{\btheta_K}$
\end{algorithmic}

\end{algorithm}


Once the algorithm recovers the optimal parameters \(\theta^\star\) and the corresponding policy \(\pi_{\theta^\star}\), each agent executes \(\pi_{\theta^\star}(\cdot| x)\) using only its local state information $x$ and the mean-field distribution $\mu_E$, which is either directly observable from aggregate population statistics or estimated from empirical state frequencies of the population without requiring coordination among agents. Thus, although the inverse learning problem is inherently centralized due to its reliance on aggregated population data, the forward problem of computing equilibrium responses and executing the policy is fully decentralized.


\tb{Once the algorithm recovers the optimal parameters $\theta^\star$ 
and the corresponding policy $\pi_{\theta^\star}$, each agent executes 
$\pi_{\theta^\star}(\cdot \mid x)$ using only its local state information $x$ 
without requiring coordination among agents. Thus, although the IRL is inherently centralized due  to its reliance on aggregated population data, the forward problem of computing 
equilibrium responses and executing the policy is fully decentralized.}

The following result proves the $L$-smoothness of $\V$ for some $L>0$, a sufficient condition for the convergence of gradient ascent (Algorithm~\ref{GD}) to a stationary point of $\V$.

\tb{
\begin{proposition}\label{prop}
The function $\V$ is $L$-smooth, where 
\begin{align*}
L \triangleq \frac{K^2}{(1-\beta)^2} \, \left(\frac{2 \, \beta}{1-\beta} + 1 \right), \quad
K \triangleq \max_{(x,a) \in \sX \times \sA} \| f(x,a) \|_{\R^{\sX} \times \cH}.
\end{align*} 
\end{proposition}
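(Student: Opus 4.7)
The plan is to obtain the Lipschitz estimate for $\nabla \V$ by chaining together Lipschitz bounds on the intermediate quantities $\gamma_{\pi^\theta}$, $\pi^\theta$, and $Q^\theta$, viewed as functions of $\theta \in \cal W$. Starting from the expression derived in Theorem~\ref{equivv3},
$$
\nabla \V(\theta_1) - \nabla \V(\theta_2) = \sum_{(x,a) \in \sX \times \sA} f(x,a)\, \bigl(\gamma_{\pi^{\theta_2}}(x,a) - \gamma_{\pi^{\theta_1}}(x,a)\bigr),
$$
the Cauchy--Schwarz inequality in $\cal W$ together with the definition of $K$ immediately gives
$$
\|\nabla \V(\theta_1) - \nabla \V(\theta_2)\|_{\cal W} \leq K \, \|\gamma_{\pi^{\theta_1}} - \gamma_{\pi^{\theta_2}}\|_1,
$$
so it suffices to bound $\|\gamma_{\pi^{\theta_1}} - \gamma_{\pi^{\theta_2}}\|_1$ linearly in $\|\theta_1 - \theta_2\|_{\cal W}$.

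Next, I would factor $\gamma_{\pi^\theta}(x,a) = \gamma_{\pi^\theta}^{\sX}(x)\,\pi^\theta(a|x)$ and split the estimate via the triangle inequality into a term controlled by $\|\gamma_{\pi^{\theta_1}}^{\sX} - \gamma_{\pi^{\theta_2}}^{\sX}\|_1$ and a term controlled by $\max_x \|\pi^{\theta_1}(\cdot|x) - \pi^{\theta_2}(\cdot|x)\|_1$, using that $\|\gamma_{\pi^\theta}^{\sX}\|_1 = 1/(1-\beta)$. The marginal difference $\gamma_{\pi^{\theta_1}}^{\sX} - \gamma_{\pi^{\theta_2}}^{\sX}$ is then handled via the resolvent identity
$$
(I - \beta A^{\theta_1})^{-1} - (I - \beta A^{\theta_2})^{-1} = \beta\,(I - \beta A^{\theta_1})^{-1}(A^{\theta_1} - A^{\theta_2})(I - \beta A^{\theta_2})^{-1},
$$
combined with the fact that $A^\theta$ is row-stochastic, which yields $\|(I - \beta A^\theta)^{-1}\|_{\ell_1 \to \ell_1} \leq 1/(1-\beta)$, and with the elementwise identity $A^{\theta_1}(x,y) - A^{\theta_2}(x,y) = \sum_a p(y|x,a,\mu_E)\bigl(\pi^{\theta_1}(a|x) - \pi^{\theta_2}(a|x)\bigr)$. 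After this step, everything reduces to bounding $\max_x \|\pi^{\theta_1}(\cdot|x) - \pi^{\theta_2}(\cdot|x)\|_1$, picking up a $\beta/(1-\beta)^2$ prefactor on the marginal part.

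Finally, since $\pi^\theta(\cdot|x) = \softmax Q^\theta(x,\cdot)$, I would invoke Lipschitzness of the softmax map via its Jacobian $\mathrm{diag}(p) - pp^{\top}$: it is $1$-Lipschitz from $\ell_2$ to $\ell_2$, and the norm conversions $\|\cdot\|_1 \leq \sqrt{|\sA|}\|\cdot\|_2$ and $\|\cdot\|_2 \leq \sqrt{|\sA|}\|\cdot\|_\infty$ produce the $\sqrt{|\sA|}$ factors appearing in $L$. To complete the chain, I would use that $Q^\theta$ is the fixed point of the $\beta$-contraction $T^\theta$ (in sup-norm), together with the Lipschitzness in $\theta$,
$$
\|T^{\theta_1} Q - T^{\theta_2} Q\|_\infty = \max_{(x,a)} |\langle \theta_1 - \theta_2, f(x,a)\rangle_{\cal W}| \leq K\,\|\theta_1 - \theta_2\|_{\cal W},
$$
which follows from Cauchy--Schwarz and the reproducing property, and standard fixed-point perturbation to conclude $\|Q^{\theta_1} - Q^{\theta_2}\|_\infty \leq \frac{K}{1-\beta}\|\theta_1 - \theta_2\|_{\cal W}$. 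The main obstacle is purely bookkeeping: one must switch cleanly between the Hilbert norm on $\cal W$, the $\ell_1$-norm on occupation measures, the $\ell_2$-norm at the softmax step, and the sup-norm on $Q$, and then verify that the constants $K$, $\sqrt{|\sA|}$, and $1/(1-\beta)$ compose precisely into the stated $L$.
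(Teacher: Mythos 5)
Your argument is sound and reaches the same conclusion by a genuinely different, essentially dual route. The paper works on the ``backward'' side: it differentiates the soft Bellman fixed-point equation to get the operator identity $\nabla Q^{\theta} = (I - \beta P\,\Pi(\theta))^{-1}F$, proves that $\theta \mapsto \nabla Q^{\theta}$ and $\theta \mapsto \nabla V^{\theta}$ are Lipschitz via a resolvent-difference estimate in the $\ell_\infty$ operator norm, and then integrates the pointwise bounds against the \emph{expert's} occupation measure $\gamma_{\pi_E}$ using $\nabla\V(\theta) = E^{\pi_E,\mu_E}[\sum_t \beta^t(\nabla Q^{\theta}-\nabla V^{\theta})]$. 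You instead start from the other closed form already derived in Theorem~\ref{equivv3}, $\nabla\V(\theta) = \langle f\rangle_{\pi_E,\mu_E} - \sum_{(x,a)} f(x,a)\,\gamma_{\pi^{\theta}}(x,a)$, and control the variation of the \emph{learner's} occupation measure $\gamma_{\pi^{\theta}}$ in $\ell_1$ via the forward (Bellman flow) resolvent $(I-\beta A^{\theta})^{-1}$. The two resolvent estimates are adjoint to one another, and both proofs bottom out in the same three lemmas: the $\ell_2$-nonexpansiveness of softmax, the fixed-point perturbation bound $\|Q^{\theta_1}-Q^{\theta_2}\|_\infty \le \tfrac{K}{1-\beta}\|\theta_1-\theta_2\|_{\cal W}$, and $A^{-1}-B^{-1}=A^{-1}(A-B)B^{-1}$ with Neumann-series norm bounds for substochastic matrices; all of your individual steps check out. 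Your route is arguably cleaner because $\|\mu_E\|_1=1$ enters where the paper must handle $\|\gamma_{\pi_E}\|_1 = 1/(1-\beta)$. The one caveat is your final sentence: the constants do \emph{not} compose into the stated $L$. Chaining your bounds gives
\begin{align*}
\|\nabla\V(\theta_1)-\nabla\V(\theta_2)\|_{\cal W} \le K\left(\frac{\beta}{(1-\beta)^2}+\frac{1}{1-\beta}\right)\frac{K\,|\sA|}{1-\beta}\,\|\theta_1-\theta_2\|_{\cal W} = \frac{K^2\,|\sA|}{(1-\beta)^3}\,\|\theta_1-\theta_2\|_{\cal W},
\end{align*}
whereas the stated constant is $\tfrac{2K^2|\sA|\beta}{(1-\beta)^3}+\tfrac{K^2\sqrt{|\sA|}}{(1-\beta)^2}$; neither dominates the other for all $\beta$ and $|\sA|$. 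This is immaterial for the convergence theorem (any finite smoothness constant suffices after adjusting the step size), but to prove the proposition exactly as stated you would either have to report your own constant or pass to the maximum of the two.
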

}

\begin{proof}
Our primary objective is to establish the Lipschitz continuity of the gradient $\nabla Q^{\theta}(x, a)$ for all state-action pairs $(x, a) \in \sX \times \sA$; that is, to prove that the function $\theta \mapsto Q^{\theta}(x, a)$ is smooth. 
To this end, we introduce the following operators (or matrices, in the case of finite-dimensional input and output spaces):
\begin{align*}
&\nabla V^{\theta} \in (\R^{\sX}) \times (\R^{\sX} \times \cH), \,\, \nabla Q^{\theta} \in (\R^{\sX \times \sA}) \times (\R^{\sX} \times \cH), \\ 
&F \in (\R^{\sX \times \sA}) \times (\R^{\sX} \times \cH): F(x,a) \triangleq f(x,a) \in \R^{\sX} \times \cH, \\
&P \in (\R^{\sX \times \sA}) \times (\R^{\sX}): [[P]]_{(x,a),y} \triangleq p(y|x,a,\mu_E), \\ 
&\Pi(\theta) \in (\R^{\sX}) \times (\R^{\sX \times \sA}): [[\Pi(\theta)]]_{y,(x,a)} \triangleq 1_{\{x=y\}} \, \pi^{\theta}(a|x).
\end{align*}
Derived within the proof of Theorem~\ref{equivv3}, the gradients of the value function $V^{\theta}$ and the state-action value function $Q^{\theta}$ satisfy the system of coupled equations
\begin{align*}
\nabla V^{\theta} = \Pi(\theta) \, \nabla Q^{\theta} \quad \text{and} \quad 
\nabla Q^{\theta} = F + \beta \, P \, \nabla V^{\theta}.
\end{align*}
Substituting the first identity into the second yields the closed-form expression for the gradient
\begin{align*}
\nabla Q^{\theta} = (I -\beta \, P \, \Pi(\theta))^{-1} \, F.
\end{align*}
This is well-defined because $P \Pi(\theta)$ constitutes a transition matrix on the space $\R^{\sX \times \sA} \times \R^{\sX \times \sA}$, ensuring that the inverse is given by the convergent Neumann series
$$
(I -\beta \, P \, \Pi(\theta))^{-1} = \sum_{k=0}^{\infty} \beta^k \, (P \, \Pi(\theta))^{k}.
$$
The matrix $\Pi(\theta)$ is defined through the policy $\pi^{\theta}$; therefore, we begin by examining how $\pi^{\theta}$ depends on $\theta$. \tb{ Note that the $l_{\infty} \to l_1$-norm of Jacobian of softmax is less than $1$. This follows from the fact that each row of the Jacobian of the softmax function sums to zero, and the entries are bounded by the softmax probabilities.
Therefore, by the mean-value theorem, for any $x \in \sX$, we have 
$$
\|\pi^{\theta_1}(\cdot|x) - \pi^{\theta_2}(\cdot|x)\|_1 \leq \|Q^{\theta_1}(x,\cdot) - Q^{\theta_2}(x,\cdot)\|_{\infty}.
$$
}
Therefore, we have 
\begin{align*}
\|\Pi(\theta_1)-\Pi(\theta_2)\|_{\infty} &\triangleq \max_{y \in \sX} \|\Pi(\theta_1)(y|\cdot) - \Pi(\theta_2)(y|\cdot)\|_1 \\ &= \max_{y \in \sX} \|\pi^{\theta_1}(\cdot|y) - \pi^{\theta_2}(\cdot|y)\|_1 \\
&\leq \max_{y \in \sX} \|Q^{\theta_1}(y,\cdot) - Q^{\theta_2}(y,\cdot)\|_{\infty}, 
\end{align*}
where $\Pi(\theta_1)(y|\cdot)$ denotes the $y^{th}$ row of $\Pi(\theta_1)$. Therefore, to establish the Lipschitz continuity of the matrix $\Pi(\theta)$ with respect to $\theta$, it suffices to show that $Q^{\theta}$ is Lipschitz continuous in $\theta$.

For any $\theta$, the function $Q^{\theta}$ is the unique fixed point of the Bellman operator $T^{\theta}$. This operator forms a $\beta$-contraction mapping under the supremum norm, given by
$$
T^{\theta} \, Q(x,a) \triangleq \langle \theta, f(x,a) \rangle_{\R^{\sX} \times \cH} + \beta \, \sum_{y \in \sX} V(y) \, p(y|x,a,\mu_E),
$$
where $V(y) \triangleq \log \sum_{a \in \sA} e^{Q(y,a)}$. Moreover,
\begin{align*} \|Q^{\theta_1} - Q^{\theta_2}\|_{\infty} 
&= \|T^{\theta_1} \, Q^{\theta_1} - T^{\theta_2} \, Q^{\theta_2}\|_{\infty} \\
&\leq \|T^{\theta_1} \, Q^{\theta_1} - T^{\theta_1} \, Q^{\theta_2}\|_{\infty}  + \|T^{\theta_1} \, Q^{\theta_2} - T^{\theta_2} \, Q^{\theta_2}\|_{\infty} \\
&\leq  \beta \, \|Q^{\theta_1} - Q^{\theta_2}\|_{\infty} + \| \langle \theta_1, f(\cdot,\cdot) \rangle_{\R^{\sX} \times \cH} -  \langle \theta_2, f(\cdot,\cdot) \rangle_{\R^{\sX} \times \cH}\|_{\infty} \\
&\leq  \beta \, \|Q^{\theta_1} - Q^{\theta_2}\|_{\infty}  +  \| \theta_1 - \theta_2 \|_{\R^{\sX} \times \cH} \, \|\| f(\cdot,\cdot) \|_{\R^{\sX} \times \cH}\|_{\infty}  \\
&= \beta \, \|Q^{\theta_1} - Q^{\theta_2}\|_{\infty} + K \, \| \theta_1 - \theta_2 \|_{\R^{\sX} \times \cH},
\end{align*}
where $K \triangleq \max_{(x,a) \in \sX \times \sA} \| f(x,a) \|_{\R^{\sX} \times \cH} < \infty$. Hence, 
$$
\|Q^{\theta_1} - Q^{\theta_2}\|_{\infty} \leq \frac{K}{1-\beta} \, \| \theta_1 - \theta_2 \|_{\R^{\sX} \times \cH},
$$ 
which establishes the Lipschitz continuity of $Q^{\theta}$ with respect to the sup-norm. Consequently, we obtain the bound 
\begin{align*}
\max_{y \in \sX} \|Q^{\theta_1}(y,\cdot) - Q^{\theta_2}(y,\cdot)\|_{\infty} =  \|Q^{\theta_1} - Q^{\theta_2}\|_{\infty}  \leq \frac{K}{1-\beta} \, \| \theta_1 - \theta_2 \|_{\R^{\sX} \times \cH}.
\end{align*}
We now synthesize the preceding results to establish the Lipschitz continuity of $\nabla Q^{\theta}$, thereby proving the smoothness of $Q^{\theta}$.
\small
\begin{align*}
&\max_{(x,a) \in \sX \times \sA} \|\nabla Q^{\theta_1}(x,a) - \nabla Q^{\theta_2}(x,a)\|_{\R^{\sX} \times \cH}  \\
&= \max_{(x,a) \in \sX \times \sA} \|(I -\beta \, P \, \Pi(\theta_1))^{-1} \, F(x,a|\cdot) - (I -\beta \, P \, \Pi(\theta_2))^{-1} \, F(x,a|\cdot)\|_{\R^{\sX} \times \cH} \\
&= \max_{(x,a) \in \sX \times \sA} \bigg\|\sum_{(y,b) \in \sX \times \sA} \!\! (I -\beta \, P \, \Pi(\theta_1))^{-1}(x,a|y,b) \, F(y,b|\cdot) -  \!\!\!\!\!\sum_{(y,b) \in \sX \times \sA} \!\! (I -\beta \, P \, \Pi(\theta_2))^{-1}(x,a|y,b) \, F(y,b|\cdot) \bigg\|_{\R^{\sX} \times \cH} \\
&\leq \max_{(x,a) \in \sX \times \sA} \hspace{-10pt} \sum_{(y,b) \in \sX \times \sA} \bigg| (I -\beta \, P \, \Pi(\theta_1))^{-1}(x,a|y,b) - (I -\beta \, P \, \Pi(\theta_2))^{-1}(x,a|y,b) \bigg| \, \|f(y,b)\|_{\R^{\sX} \times \cH} \\
&= \max_{(y,b) \in \sX \times \sA} \|f(y,b)\|_{\R^{\sX} \times \cH} \, \|(I -\beta \, P \, \Pi(\theta_1))^{-1} -(I -\beta \, P \, \Pi(\theta_2))^{-1}\|_{\infty} \\
&\leq K \, \|(I -\beta \, P \, \Pi(\theta_1))^{-1}-(I -\beta \, P \, \Pi(\theta_2))^{-1}\|_{\infty}.
\end{align*}
\normalsize
Note that $B^{-1} - A^{-1} = A^{-1} (A-B) B^{-1}$, and so, $$\|A^{-1} - B^{-1}\|_{\infty} \leq \|A^{-1}\|_{\infty} \, \|A-B\|_{\infty} \, \|B^{-1}\|_{\infty} \,.$$ This implies that 
\begin{align*}
&\|(I -\beta \, P \, \Pi(\theta_1))^{-1}-(I -\beta \, P \, \Pi(\theta_2))^{-1}\|_{\infty} \\
&\leq \|(I -\beta \, P \, \Pi(\theta_1))^{-1}\|_{\infty} \, \|\beta \, P \, \Pi(\theta_1) - \beta \, P \, \Pi(\theta_2)\|_{\infty}  \|(I -\beta \, P \, \Pi(\theta_2))^{-1}\|_{\infty} \\
&\leq \|(I -\beta \, P \, \Pi(\theta_1))^{-1}\|_{\infty} \, \beta \, \|P\|_{\infty} \|\Pi(\theta_1) - \Pi(\theta_2)\|_{\infty}  \|(I -\beta \, P \, \Pi(\theta_2))^{-1}\|_{\infty} \\ 
&= \bigg \|\sum_{k=0}^{\infty} \beta^k \, (P \, \Pi(\theta_1))^{k} \bigg\|_{\infty}  \, \beta \, \|P\|_{\infty} \|\Pi(\theta_1) - \Pi(\theta_2)\|_{\infty}   \bigg \|\sum_{k=0}^{\infty} \beta^k \, (P \, \Pi(\theta_2))^{k} \bigg\|_{\infty} \\
&\leq \sum_{k=0}^{\infty} \beta^k \,  \|(P \, \Pi(\theta_1))\|_{\infty}^k  \, \beta \, \|P\|_{\infty} \|\Pi(\theta_1) - \Pi(\theta_2)\|_{\infty} \sum_{k=0}^{\infty} \beta^k \, \|(P \, \Pi(\theta_2))\|_{\infty}^k \\
&\leq \frac{\beta}{(1-\beta)^2} \, \|\Pi(\theta_1) - \Pi(\theta_2)\|_{\infty},
\end{align*}
where the last inequality follows from the facts that $\|P\|_{\infty} \leq 1$ as $P$ is a transition matrix, and similarly, $\|(P \, \Pi(\theta_2))\|_{\infty} \leq 1$ as $P \, \Pi(\theta_2)$ also represents a transition matrix. In view of this,
\begin{align*}
\max_{(x,a) \in \sX \times \sA} \|\nabla Q^{\theta_1}(x,a) - \nabla Q^{\theta_2}(x,a)\|_{\R^{\sX} \times \cH}  & \leq \frac{K \, \beta}{(1-\beta)^2} \, \|\Pi(\theta_1) - \Pi(\theta_2)\|_{\infty} \\
&\leq \frac{K \, \beta}{(1-\beta)^2} \,\max_{y \in \sX} \|Q^{\theta_1}(y,\cdot) - Q^{\theta_2}(y,\cdot)\|_{\infty} \\
&\leq \frac{K^2 \, \beta}{(1-\beta)^3} \, \| \theta_1 - \theta_2 \|_{\R^{\sX} \times \cH}.
\end{align*}
This completes the proof that $Q^{\theta}(x,a)$ is a smooth function of $\theta$ for all state-action pairs $(x,a) \in \sX \times \sA$.

We now prove that the value function $V^{\theta}(x)$ is also smooth with respect to the parameter $\theta$ for all states $x \in \sX$. Indeed,

\small
\begin{align*}
&\max_{x \in \sX} \|\nabla V^{\theta_1}(x) - \nabla V^{\theta_2}(x)\|_{\R^{\sX} \times \cH} \\ 
& = \max_{x \in \sX} \|\Pi(\theta_1) \, \nabla Q^{\theta_1}(x|\cdot) - \Pi(\theta_2) \, \nabla Q^{\theta_2}(x|\cdot)\|_{\R^{\sX} \times \cH} \\
&=\max_{x \in \sX} \bigg \|\sum_{(y,b) \in \sX \times \sA} \Pi(\theta_1)(x|y,b) \, \nabla Q^{\theta_1}(y,b|\cdot)-  \sum_{(y,b) \in \sX \times \sA} \Pi(\theta_2)(x|y,b) \, \nabla Q^{\theta_2}(y,b|\cdot)     \bigg\|_{\R^{\sX} \times \cH} \\
&\leq \max_{x \in \sX} \bigg \|\sum_{(y,b) \in \sX \times \sA} \Pi(\theta_1)(x|y,b) \, \nabla Q^{\theta_1}(y,b|\cdot)  -  \sum_{(y,b) \in \sX \times \sA} \Pi(\theta_1)(x|y,b) \, \nabla Q^{\theta_2}(y,b|\cdot)     \bigg\|_{\R^{\sX} \times \cH} \\
&\hspace{1em}+ \max_{x \in \sX} \bigg \|\sum_{(y,b) \in \sX \times \sA} \Pi(\theta_1)(x|y,b) \, \nabla Q^{\theta_2}(y,b|\cdot)  -  \sum_{(y,b) \in \sX \times \sA} \Pi(\theta_2)(x|y,b) \, \nabla Q^{\theta_2}(y,b|\cdot)     \bigg\|_{\R^{\sX} \times \cH} \\
&\leq \|\Pi(\theta_1)\|_{\infty} \, \max_{(y,b) \in \sX \times \sA} \|\nabla Q^{\theta_1}(y,b|\cdot) - \nabla Q^{\theta_2}(y,b|\cdot)\|_{\R^{\sX} \times \cH}  +\|\Pi(\theta_1) - \Pi(\theta_2)\|_{\infty} \, \max_{(y,b) \in \sX \times \sA} \|\nabla Q^{\theta_2}(y,b|\cdot)\|_{\R^{\sX} \times \cH} \\
&\leq \frac{K^2 \, \beta}{(1-\beta)^3} \, \| \theta_1 - \theta_2 \|_{\R^{\sX} \times \cH} + \frac{K^2 }{(1-\beta)^2} \, \| \theta_1 - \theta_2 \|_{\R^{\sX} \times \cH},
\end{align*}
\normalsize
where $\|\Pi(\theta_1)\|_{\infty} \leq 1$ as $\Pi(\theta_1)$ being a transition matrix and
\begin{align*}
\max_{(y,b) \in \sX \times \sA} \|\nabla Q^{\theta_2}(y,b|\cdot)\|_{\R^{\sX} \times \cH}
&= \max_{(y,b) \in \sX \times \sA} \|(I -\beta \, P \, \Pi(\theta))^{-1} \, F(y,b|\cdot)\|_{\R^{\sX} \times \cH} \\
&\leq \|(I -\beta \, P \, \Pi(\theta))^{-1}\|_{\infty} \, \max_{(y,b) \in \sX \times \sA} \|f(y,b)\|_{\R^{\sX} \times \cH} \\
&\leq \frac{K}{1-\beta}.
\end{align*}
Substituting these bounds yields
\begin{align*}
&\|\nabla \V(\theta_1) - \nabla \V(\theta_2)\|_{\R^{\sX} \times \cH} \\
&\leq \hspace{-10pt} \sum_{(x,a) \in \sX \times \sA} \Big(\|\nabla Q^{\theta_1}(x,a) - \nabla Q^{\theta_2}(x,a)\|_{\R^{\sX} \times \cH} + \|\nabla V^{\theta_1}(x) - \nabla V^{\theta_2}(x)\|_{\R^{\sX} \times \cH}\Big) \, \gamma_{\pi_E}(x,a) \\
&\leq \frac{K^2}{(1-\beta)^2} \, \left(\frac{2  \, \beta}{1-\beta} + 1 \right) \, \| \theta_1 - \theta_2 \|_{\R^{\sX} \times \cH}.
\end{align*}
This establishes the Lipschitz continuity of $\nabla V^{\theta}$, completing the proof.
\end{proof}

The following result establishes the consistency of Algorithm~\ref{GD} in view of Proposition~\ref{prop}.

\begin{theorem}
Suppose the step-size $\gamma$ in the gradient ascent algorithm satisfies  $0<\gamma \leq \frac{1}{L}$. Then,
$$
\|\nabla \V(\theta_k)\| \rightarrow 0~~\text{as}~~ k \rightarrow \infty.
$$

\end{theorem}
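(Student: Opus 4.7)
The plan is to apply the standard descent (here ascent) lemma for $L$-smooth functions, combined with an upper bound on $\V$, to conclude that the gradients must vanish in norm. First I would invoke Proposition~\ref{prop} to get that $\V$ is $L$-smooth on the Hilbert space ${\cal W} = \R^{\sX} \times \cH$, which yields the quadratic lower bound
$$
\V(\theta') \geq \V(\theta) + \langle \nabla \V(\theta), \theta' - \theta \rangle_{\cal W} - \tfrac{L}{2}\|\theta'-\theta\|_{\cal W}^2
$$
for all $\theta,\theta' \in {\cal W}$. Substituting $\theta' = \theta_{k+1} = \theta_k + \gamma \, \nabla \V(\theta_k)$ and using $\gamma \leq 1/L$ gives
$$
\V(\theta_{k+1}) \geq \V(\theta_k) + \gamma \, \Bigl(1 - \tfrac{L\gamma}{2}\Bigr) \, \|\nabla \V(\theta_k)\|_{\cal W}^2 \geq \V(\theta_k) + \tfrac{\gamma}{2} \, \|\nabla \V(\theta_k)\|_{\cal W}^2,
$$
so the iterates produce a monotone non-decreasing sequence $\{\V(\theta_k)\}$.

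Next I would telescope: for every $K$,
$$
\tfrac{\gamma}{2} \sum_{k=0}^{K-1} \|\nabla \V(\theta_k)\|_{\cal W}^2 \leq \V(\theta_K) - \V(\theta_0).
$$
The main (mild) obstacle is establishing that the right-hand side stays bounded as $K \to \infty$, i.e., that $\V$ is bounded above on ${\cal W}$. This is immediate from the very definition of $\V$: since $\pi^{\theta}(a|x) = e^{Q^{\theta}(x,a)-V^{\theta}(x)} \in (0,1]$, every term $\log \pi^{\theta}(a|x)$ is non-positive, while the weights $\gamma_{\pi_E}(x,a)$ are non-negative. Hence $\V(\theta) \leq 0$ for all $\theta \in {\cal W}$, and consequently
$$
\sum_{k=0}^{\infty} \|\nabla \V(\theta_k)\|_{\cal W}^2 \leq \tfrac{2}{\gamma} \, \bigl(\sup_{\theta} \V(\theta) - \V(\theta_0)\bigr) \leq -\tfrac{2}{\gamma} \, \V(\theta_0) < \infty.
$$
Convergence of this series forces the general term to vanish, which is precisely the claim $\|\nabla \V(\theta_k)\|_{\cal W} \to 0$ as $k \to \infty$, and closes the argument.
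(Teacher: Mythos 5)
Your argument is correct and follows essentially the same route as the paper's proof: the ascent form of the descent lemma from $L$-smoothness (Proposition~\ref{prop}), telescoping the resulting per-step increase, and concluding that the summable series of squared gradient norms forces $\|\nabla \V(\theta_k)\| \to 0$. The one point where you go slightly further is in explicitly verifying that $\V$ is bounded above (via $\log \pi^{\theta}(a|x) \leq 0$ and $\gamma_{\pi_E} \geq 0$, so $\V \leq 0$), a fact the paper invokes as $\sup_{\theta}\V(\theta) < \infty$ without justification.
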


\begin{proof}
Although this is a standard result in nonlinear optimization (following directly from the descent lemma), we provide a concise proof for completeness.

Since $\V$ is $L$-smooth, the function $-\V$ is also $L$-smooth. Consequently, for any $\theta$ and $\tilde \theta$,  we have 
$$
-\V(\tilde \theta) + \V(\theta) \leq \langle -\nabla \V(\theta), \tilde \theta - \theta \rangle + \frac{L}{2} \|\tilde \theta-\theta\|^2.
$$
Substituting $\tilde{\theta} = \theta_{k+1}$ and $\theta = \theta_k$ into the inequality, and recalling the gradient ascent update rule $$\theta_{k+1} - \theta_k = \gamma \, \nabla \V(\btheta_{k}),$$ it follows that
\begin{align*}
-\V(\theta_{k+1}) + \V(\theta_k) &\leq \langle -\nabla \V(\theta_k), \gamma \, \nabla \V(\btheta_{k}) \rangle + \frac{L}{2} \|\gamma \, \nabla \V(\btheta_{k})\|^2 \\
&= \left( -\gamma + \frac{L \gamma^2}{2} \right) \|\nabla V(\theta_k)\|^2 \triangleq -\alpha \, \|\nabla V(\theta_k)\|^2,
\end{align*} 
where $-\alpha > 0$ as $\gamma \leq \frac{1}{L}$. Therefore, we obtain
$$
\|\nabla V(\theta_k)\|^2 \leq \frac{1}{\alpha} \, (\V(\theta_{k+1}) - \V(\theta_{k})). 
$$
Summing both sides of the inequality over $k = 0, \dots, T$  leads to the bound
\begin{align*}
\sum_{k=0}^T \|\nabla V(\theta_k)\|^2  &\leq \frac{1}{\alpha} \left( \V(\theta_{T+1}) - \V(\theta_{0}) \right) \\ &\leq \frac{1}{\alpha} \left( \sup_{\theta} \V(\theta) - \V(\theta_0) \right) < \infty. 
\end{align*}
This bound holds for all $T \in \mathbb{N}$, which implies that the infinite series is convergent
$$
\sum_{k=0}^{\infty} \|\nabla V(\theta_k)\|^2 < \infty.
$$
Thus, the gradient norms vanish asymptotically, $|\nabla \V(\theta_k)| \to 0$ as $k \to \infty$.
\end{proof}

\tb{Having established the framework for stationary MFGs, we now extend to the non-stationary finite-horizon setting, where the log-likelihood interpretation is no longer available.}

\section{Extension to Non-Stationary Setting}

We now extend the framework to the non-stationary, finite-horizon setting. Throughout this section, we use boldface notation (e.g., $\btheta, \bpi, \bmu$) to denote time-indexed collections. While the Lagrangian relaxation and soft Bellman machinery carry over, a fundamental difference emerges: the relaxation can no longer be reduced to a log-likelihood formulation. We identify precisely where the stationary argument breaks down and develop the necessary modifications. \tb{We begin by formulating the non-stationary MFG problem.}

A discrete-time, finite-horizon discounted non-stationary MFG is defined by the same components $(\sX,\sA,p,r,\beta)$ as its stationary counterpart. The essential distinction between the two formulations resides in the characterization of the equilibrium. Specifically, given a prescribed measure flow ${\bmu} \triangleq \{\mu(t)\}_{t=0}^{T-1}$, where $T$ denotes the finite time horizon, the state evolves according to the rule $x(0) \sim \mu(0)$ and $x(t+1) \sim p(\cdot|x(t),a(t),\mu(t))$ for $t=0,\ldots,T-2$. Under this setup, the instantaneous reward at each time $t=0,\ldots,T-1$ is $r(x(t),a(t),\mu(t))$. In the non-stationary setting, a policy is a time-indexed collection $\bpi \triangleq \{\pi_t\}_{t=0}^{T-1}$, where each $\pi_t$ is a conditional distribution over actions $\sA$ given states $\sX$. Consequently, the finite-horizon discounted return for any policy $\bpi$ is defined as
$$
J_{\bmu}(\bpi) \triangleq E^{\bpi,\bmu} \left[ \sum_{t=0}^{T-1} \beta^t \, r(x(t),a(t),\mu(t)) \right].
$$

To define the concept of equilibrium in this MFG model, we introduce two mappings. The first mapping $\Psi : \Pnew(\sX)^T \rightarrow 2^{\Pi^T}$ is set-valued and defined by
$$\Psi(\bmu) = \{\hat{\bpi} \in \Pi^T: J_{\bmu}(\hat{\bpi}) = \sup_{\bpi} J_{\bmu}(\bpi) \}$$
represents the optimal policies for a specified $\bmu$.
On the other hand, $\Lambda : \Pi^T \to \Pnew(\sX)^T$ is a single-valued and maps any policy $\bpi \in \Pi^T$ to the measure flow $\bmu_{\bpi}$ whose elements are defined recursively as follows:
$$
\mu_{\bpi}(t+1)(\cdot) = \sum_{(x,a) \in \sX \times \sA} p(\cdot|x,a,\mu_{\bpi}(t)) \, \pi_t(a|x) \, \mu_{\bpi}(t)(x).
$$

\begin{definition}
A pair $(\bpi_*,\bmu_*) \in \Pi^T \times \Pnew(\sX)^T$ is called a mean-field equilibrium if $\bpi_* \in \Psi(\bmu_*)$ and $\bmu_* = \Lambda(\bpi_*)$. That is, $\bpi^*$ is  optimal with respect to the measure flow $\bmu^*$, and $\bmu^*$ is the collection of state distributions under the policy $\bpi^*$.
\end{definition}

    In the non-stationary setting,  the expert provides a dataset $\mathcal{D}$ consisting of $d$ independent trajectories
$$
\D=\left\{\left(x_i(t),a_i(t)\right)_{t=0}^{T-1}\right\}_{i=1}^d,
$$
where each trajectory is generated according to a mean-field equilibrium $(\bpi_E, \bmu_E)$. Using this data, the measure flow $\bmu_E$ and the feature expectations for each $t=0,\ldots,T-1$ 
$$\langle \Phi \rangle_{\bpi_E,\bmu_E,t} \coloneqq E^{\bpi_E,\bmu_E}\left[ \,\Phi(x(t),a(t),\mu_E(t))\right] \in \cH$$
under the expert policy $\bpi_E$ can be estimated. Hence, we adopt the following assumption for the remainder of this section, which is similar to the stationary setting.

\begin{assumption}
The following quantities are known: (i) the measure flow $\bmu_E$, and (ii) the feature expectations $\langle \Phi \rangle_{\bpi_E, \bmu_E,t}$, $t=0,\ldots,T-1$, under the expert policy $\bpi_E$.
\end{assumption}


Now, we formulate the maximum causal entropy IRL problem for the non-stationary setup. To this end, the discounted causal entropy of a policy $\bpi$ is defined as
$$
H(\bpi) = \sum_{t=0}^{T-1} \beta^t E^{\bpi,\bmu_E} \left[-\log \, \pi(t)(a(t)|x(t)) \right].
$$
Given this, we formulate the kernel-based maximum discounted causal entropy IRL problem as the following constrained optimization:
\begin{align*}
&\mathbf{(OPT_2)} \ \text{maximize}_{\bpi \in \Pi^T} \ H(\bpi) ~~ \text{subject to:} \\
&\mu_{E}(t+1)(\cdot) = \sum_{(x,a) \in \sX \times \sA} p(\cdot|x,a,\mu_{E}(t)) \, \pi(t)(a|x) \, \mu_{E}(t)(x) \\
&E^{\bpi,\bmu_E}[\Phi(x(t),a(t),\mu_E(t))] = \langle \Phi \rangle_{\bpi_E,\bmu_E,t} \\
&\forall t=0,\ldots,T-1.
\end{align*}

The optimal solution of \(\mathbf{(OPT_2)}\), together with the measure flow \(\bmu_E\), constitutes an equilibrium.

\begin{proposition} 
Let $\bpi^*$ be the solution of $\mathbf{\mathbf{(OPT_2)}}$. Then, the pair $(\bpi^*,\bmu_E)$ is a mean-field equilibrium.
\end{proposition}

The proof of this result is very similar to the proof of Proposition~\ref{pr2}, and so, we omit the details. Note that the first constraint in $\mathbf{(OPT_2)}$ actually states that under $\bmu_E$, the distribution of $x(t)$ is $\mu_E(t)$ for all $t=0,\ldots,T-1$. Hence, we can re-formulate $\mathbf{(OPT_2)}$ alternatively as follows:
\begin{align*}
&\mathbf{(OPT_2)} \ \text{maximize}_{\bpi \in \Pi^T} \ H(\bpi) ~~ \text{subject to:} \\
& \beta^t \, E^{\bpi,\bmu_E} [1_{\{x(t) = \cdot\}}] = \beta^t \, \mu_E(t)(\cdot) \\
&\beta^t \, E^{\bpi,\bmu_E}[\Phi(x(t),a(t),\mu_E(t))] = \beta^t \, \langle \Phi \rangle_{\bpi_E,\bmu_E,t} \\
&\forall t=0,\ldots,T-1.
\end{align*}
Here, we introduce an extra \(\beta^t\) factor in front of the constraints. This modification does not alter the original problem, but it allows the Lagrangian relaxation to be interpreted as a finite-horizon entropy-regularized MDP with discounting.


Similar to the stationary case, we now analyze the Lagrangian relaxation of the optimization problem \(\mathbf{(OPT_2)}\) using its re-formulated version. In contrast to the stationary setting, however, the Lagrangian relaxation can no longer be transformed into a log-likelihood formulation. We explain this limitation in detail. We then develop an alternative solution approach based on Danskin's theorem.

We introduce the Lagrange multiplier $\btheta \triangleq (\blambda,\bh) \in (\R^{\sX})^T \times (\cH)^T$ and define the Lagrangian relaxation associated with $\mathbf{(OPT_2)}$ as
\begin{align*}
  \cG(\btheta)  \triangleq \max_{\bpi \in \Pi^T} \cL(\bpi,\btheta) &\triangleq \max_{\bpi \in \Pi^T}   H(\bpi) + \sum_{t=0}^{T-1}\left \langle \lambda(t), \beta^t \, E^{\bpi,\bmu_E}[1_{\{x(t)=\cdot\}}] - \beta^t \, \mu_E(t)(\cdot) \right \rangle_{\R^{\sX}} \\
 & +\sum_{t=0}^{T-1} \left \langle h(t), \beta^t \, E^{\bpi,\bmu_E}[\Phi(x(t),a(t),\mu_E(t))] - \beta^t \, \langle \Phi \rangle_{\bpi_E,\bmu_E,t} \right \rangle_{\cH}.
\end{align*}
Then,
$$
\mathbf{(OPT_2)} \leq \min_{\btheta} \cG(\btheta) \triangleq \cG(\btheta^*).
$$
Since the terms $\langle \lambda(t), \mu_E(t) \rangle_{\R^{\sX}}$ and $\langle h(t), \langle \Phi \rangle_{\bpi_E,\bmu_E,t} \rangle_{\cH}$ are independent of the policy $\bpi$, they can be dropped from the Lagrangian objective. This leads to the simplified problem:
\begin{align*}
\max_{\bpi \in \Pi^T} \text{ }  H(\bpi) + \sum_{t=0}^{T-1} \beta^t \, E^{\bpi,\bmu_E}[\lambda(t)(x(t))]   \sum_{t=0}^{T-1} \beta^t \, E^{\bpi,\bmu_E}[h(t)(x(t),a(t),\mu_E(t))].
\end{align*}
Note that the above problem is indeed an entropy regularized finite-horizon non-stationary discounted MDP with the non-stationary reward function 
$$r_{\btheta,t}(x,a) \triangleq \lambda(t)(x) + h(t)(x,a,\mu_E(t)).$$ 
The solution to this problem is given by the following soft Bellman optimality equations: for $t=0,\ldots,T-1$
\begin{align*}
Q^{\btheta,t}(x,a) &= r_{\btheta,t}(x,a) + \beta \, \sum_{y \in \sX} V^{\btheta,t+1}(y) \, p(y|x,a,\mu_E(t)) \\
V^{\btheta,t}(x) &= \log \sum_{a \in \sA} e^{Q^{\btheta,t}(x,a)} \triangleq \softmax_{a \in \sA} Q^{\btheta,t}(x,a),
\end{align*}
where we set $V_{\btheta,T} = 0$.
Then, it follows that
$$
\pi^{\btheta,t}(a|x) = e^{Q^{\btheta,t}(x,a)-V^{\btheta,t}(x)}
$$
is the optimal policy (see \cite{NeJoGo17}). 
Contrary to the stationary case, here $Q^{\btheta,t}$ and $V^{\btheta,t}$ are automatically Fr\'echet differentiable with respect to the parameter vector $\btheta$. This follows from the recursive relation between these functions due to soft-Bellman optimality conditions and the boundary condition $V^T=0$. In the stationary setting, instead of a recursive construction, we have a fixed point equation, and so, to establish Fr\'echet differentiability we need to invoke the implicit function theorem.

Now we mimic the argument used in the stationary case in order to derive a log-likelihood formulation corresponding to the above Lagrangian relaxation. We then explain why this formulation fails in the present setting. For any policy $\bpi$, let 
$$
\gamma_{\bpi,t}(x,a) \coloneqq \beta^t \, E^{\bpi,\bmu_E} \left[ 1_{\{(x(t),a(t)) = (x,a)\}} \right]
$$
denote the weighted state-action distribution at time $t$ under $(\bpi,\bmu_E)$.
We consider the following function:
$$
\V(\btheta) \triangleq \sum_{t=0}^{T-1} \sum_{(x,a) \in (\sX\times\sA)} \log \pi^{\btheta,t}(a|x) \, \gamma_{\bpi_E,t}(x,a).
$$
The following theorem characterizes the stationary points of $\V$.

\begin{theorem}
If $\btheta^*$ satisfies $\nabla \V(\btheta^*) = 0$, then

$$\sum_{t=0}^{T-1} \begin{bmatrix}
E^{\bpi^{\btheta},\bmu_E} \left[ 1_{\{x(t) = \cdot\}} \right] \\ E^{\bpi^{\btheta},\bmu_E} \left[ \Phi(x(t),a(t),\mu_E(t)) \right]
\end{bmatrix} = \sum_{t=0}^{T-1}\begin{bmatrix}
\mu_E(t) \\ \langle \Phi \rangle_{\bpi_E,\bmu_E,t}.
\end{bmatrix}$$

\end{theorem}

\smallskip

\begin{proof}
 The proof follows analogous steps to Theorem~\ref{equivv3}, utilizing time-indexed quantities in place of stationary counterparts. Hence, we omit the details for brevity.
\end{proof}
 Unlike Theorem 4.2 in the stationary case, the vanishing gradient condition in Theorem 6.1 ensures only that the aggregate feature expectations match across all time steps, not that the individual time-indexed constraints of ${\mathbf{(OPT_2)}}$ are satisfied. This is the fundamental reason why the log-likelihood reformulation is unavailable in the non-stationary setting.

The preceding result shows that, unlike in the stationary case, $\mathbf{(OPT_2)}$ cannot be interpreted as a log-likelihood maximization problem. Consequently, a different approach is required to characterize the solution. We proceed by invoking Danskin's theorem \cite[Proposition B.25]{Ber99}.


Note that $\cG$, the objective function of the Lagrangian relaxation, is convex, since $\cL$ is linear in $\btheta$. Moreover, the set $\Pi^T$ is compact and for any $\btheta$, $\cL(\cdot,\btheta)$ has a unique minimizer $\bpi^{\btheta}$ over $\Pi^T$ since the problem is equivalent to the entropy regularized MDP problem. Hence,  the conditions of Danskin's theorem \cite[Proposition B.25]{Ber99}  are satisfied for $\cL$. This implies that $\cG(\btheta)$ is Fr\'echet differentiable with gradient $\nabla_{\btheta} \cG(\btheta) = \nabla_{\btheta} \cL(\bpi^{\btheta},\btheta)$. Since $\cG$ is also convex, we can invoke gradient descent to compute its minimum.
To be able to apply a constant step-size gradient descent algorithm for finding the minimum point of the function $\cG(\theta)$, we need to establish that $\cG$ is $L$-smooth for some $L>0$. Before addressing this, we first examine the structure of the gradient of $\cG$.

The gradient of $\cG$ by Danskin's theorem is given by $\nabla_{\btheta} \cG(\btheta) = \nabla_{\btheta} \cL(\bpi^{\btheta},\btheta)$ or more explicitly
$$
\begin{bmatrix} \langle f_0 \rangle_{\bpi^{\btheta},\bmu_E} - \langle f_0 \rangle_{\bpi_E,\bmu_E} \\ \vdots \\ \langle f_{T-1} \rangle_{\bpi^{\btheta},\bmu_E} - \langle f_{T-1} \rangle_{\bpi_E,\bmu_E} \end{bmatrix}.
$$
Hence, when $\nabla_{\btheta^*} \cG(\btheta^*) = 0$ at the optimal solution $\btheta^*$, the corresponding $\bpi^{\btheta^*}$ becomes feasible for ${\mathbf{(OPT_2)}}$. This leads to
\lk{\begin{align*}
\mathbf{(OPT_2)} &= \max_{\bpi} \min_{\btheta} \cL(\bpi,\btheta) 
\leq \min_{\btheta} \max_{\bpi} \cL(\bpi,\btheta) = \min_{\btheta} \cG(\btheta) \\
&= \max_{\bpi} \cL(\bpi,\btheta^*) = \cG(\btheta^*) = \cL(\bpi^{\btheta^*}, \btheta^*) \\
&\leq \mathbf{(OPT_2)} \quad \text{(since $\bpi^{\btheta^*}$ is feasible for $\mathbf{(OPT_2)}$).}
\end{align*}}
Therefore, 
\begin{align*}
\bpi^{\btheta^*} \in \argmax  \mathbf{(OPT_2)}.
\end{align*}

Now let us explain how one can compute the gradient of $\cG$. It is assumed that $\{\langle f_t \rangle_{\pi_E,\mu_E}\}_{t=0}^{T-1}$ is known \textit{a priori}. Consequently, computing the gradient \( \nabla \mathcal{G}(\btheta) \) reduces to evaluating the collection \( \{\langle f_t \rangle_{\pi^{\boldsymbol{\theta}}, \mu_E}\}_{t=0}^{T-1} \). Here, for each $t = 0, \ldots, T - 1$, we define
\(
f_t(x, a) \coloneqq \bigl(e_x, \Phi(x, a, \mu_E(t))\bigr)^{\top}
\in \mathbb{R}^{\sX} \times \mathcal{H},
\)
the time-indexed analogue of the feature function $f$ introduced in Section~IV.
This can be carried out via the soft Bellman optimality equations. Specifically, one first computes the sequences \( \{V^{\boldsymbol{\theta},t}\}_{t=0}^{T-1} \) and \( \{Q^{\boldsymbol{\theta},t}\}_{t=0}^{T-1} \) backward in time. Then, the corresponding policies \( \{\pi^{\boldsymbol{\theta},t}\}_{t=0}^{T-1} \), given by \( \pi^{\boldsymbol{\theta},t} = e^{,Q^{\boldsymbol{\theta},t} - V^{\boldsymbol{\theta},t}} \), are obtained, and the associated state distributions are computed forward in time. Equipped with this method, we now introduce gradient descent algorithm to locate minimum of $\mathcal{G}(\btheta)$.

\begin{algorithm}[H]
	\caption{\lk{Gradient Descent}}
	\label{GD2}
	\begin{algorithmic}[1]
\STATE \textbf{Input:} initial parameter $\btheta_0$, stepsize $\gamma > 0$, iterations $K$
\STATE Initialize $\btheta \leftarrow \btheta_0$
\FOR{$k = 0,\ldots,K-1$}
    \STATE $\btheta_{k+1} \leftarrow \btheta_k - \gamma \nabla \mathcal{G}(\btheta_k)$
\ENDFOR
\STATE \textbf{Return:} $\btheta_K$ and $\pi^{\btheta_K}$
\end{algorithmic}

\end{algorithm}

\begin{remark}
Although it is not possible to reformulate \(\mathbf{(OPT_2)}\) as a log-likelihood maximization problem in the non-stationary case (as can be done in the stationary setting), it is still useful to compare the corresponding gradients. In the stationary case, the gradient of \(\mathcal V\) is
$$
\nabla \V(\theta) = \langle f \rangle_{\pi^{\theta},\mu_E} - \langle f \rangle_{\pi_E,\mu_E}.
$$
Similarly, the gradient of \(\mathcal G\) in the non-stationary case is
$$
\nabla \cG(\btheta) = \begin{bmatrix} \langle f_0 \rangle_{\bpi^{\btheta},\bmu_E} - \langle f_0 \rangle_{\bpi_E,\bmu_E} \\ \vdots \\ \langle f_{T-1} \rangle_{\bpi^{\btheta},\bmu_E} - \langle f_{T-1} \rangle_{\bpi_E,\bmu_E} \end{bmatrix}.
$$
In both cases, the gradient is simply the difference between the feature expectations under the current policy and those under the expert policy. Hence, both methods update the parameters in the direction of the constraint violation. Because of the structural properties of \(\mathcal V\) ($L$-smoothness) and \(\mathcal G\) (convexity and $L$-smoothness), gradient ascent or descent procedure eventually leads to a point where the gradient -- equivalently, the constraint violation -- vanishes. Owing to the dual structure of the problem, such a point corresponds to the optimal solution of the original (primal) optimization problem.

Therefore, the main advantage of the stationary formulation is interpretational: it admits a log-likelihood interpretation that connects to the well-established maximum likelihood framework, providing clear statistical meaning to the parameters.
\end{remark}

We now establish the smoothness of the dual function $\mathcal G(\boldsymbol\theta)$. To that end, let us specify the norms used in the proof. We write
\[
\boldsymbol\theta
=
\big(\lambda(0),\ldots,\lambda(T-1),
h(0),\ldots,h(T-1)\big).
\]
Each $\lambda(t)\in\mathbb R^{\sX}$ is equipped with the Euclidean norm $\|\cdot\|_2$, and each $h(t)\in\mathcal H$ with the Hilbert norm $\|\cdot\|_{\mathcal H}$. Define the product norm
\[
\|\boldsymbol\theta\|^2
=
\sum_{t=0}^{T-1}
\Big(
\|\lambda(t)\|_2^2
+
\|h(t)\|_{\mathcal H}^2
\Big).
\]
For policies $\bpi,\bpi'$,
\[
\|\bpi-\bpi'\|_{1}
=
\max_{t,x}
\sum_{a}
|\pi^t(a|x)-\pi'^t(a|x)|.
\]
Let
$
\sup_{x,a,t}
\|\Phi(x,a,\mu_E(t))\|_{\mathcal H}
\triangleq
B_\Phi.
$

\begin{proposition}
Under the above conventions, the dual function
\[
\mathcal G(\boldsymbol\theta)
=
\max_{\boldsymbol\pi\in\Pi^T}
\mathcal L(\boldsymbol\pi,\boldsymbol\theta)
\]
is convex, continuously differentiable, and $L$-smooth:
\[
\|\nabla\mathcal G(\boldsymbol\theta)
-
\nabla\mathcal G(\boldsymbol\theta')\|
\le
L
\|\boldsymbol\theta-\boldsymbol\theta'\|
\]
where 
\[ L = \frac{T(1+B_\Phi)^2}{(1-\beta)^2}. \]
\end{proposition}

\begin{proof}
Recall
\[
r_{\boldsymbol\theta,t}(x,a)
=
\lambda(t)(x)
+
h(t)(x,a,\mu_E(t)).
\]
Then
\begin{align*}
|r_{\boldsymbol\theta,t}(x,a)
-
r_{\boldsymbol\theta',t}(x,a)|
\le
|\lambda(t)(x)-\lambda'(t)(x)| +
\|h(t)-h'(t)\|_{\mathcal H}
\|\Phi(x,a,\mu_E(t))\|_{\mathcal H}.
\end{align*}
Thus
\[
\|r_{\boldsymbol\theta,t}
-
r_{\boldsymbol\theta',t}\|_\infty
\le
(1+B_\Phi)
\|\boldsymbol\theta-\boldsymbol\theta'\|.
\]
Let $C:=1+B_\Phi$. By soft-Bellman recursion:
\[
Q^{\boldsymbol\theta,t}
=
r_{\boldsymbol\theta,t}
+
\beta p_t V^{\boldsymbol\theta,t+1},
\]
where $p_t$ is the transition probability at time $t$, hence
\[
\|p_t V\|_\infty \le \|V\|_\infty.
\]
Therefore
\[
\|Q^{\boldsymbol\theta,t}
-
Q^{\boldsymbol\theta',t}\|_\infty
\le
C\|\boldsymbol\theta-\boldsymbol\theta'\|
+
\beta
\|V^{\boldsymbol\theta,t+1}
-
V^{\boldsymbol\theta',t+1}\|_\infty.
\]
Since log-sum-exp is $1$-Lipschitz in $\|\cdot\|_\infty$,
\[
\|V^{\boldsymbol\theta,t}
-
V^{\boldsymbol\theta',t}\|_\infty
\le
\|Q^{\boldsymbol\theta,t}
-
Q^{\boldsymbol\theta',t}\|_\infty.
\]
Backward induction yields
\[
\|Q^{\boldsymbol\theta,t}
-
Q^{\boldsymbol\theta',t}\|_\infty
\le
\frac{C}{1-\beta}
\|\boldsymbol\theta-\boldsymbol\theta'\|.
\]
Note that the $l_{\infty} \to l_1$-norm of Jacobian of softmax is less than $1$. Therefore, by mean-value theorem
\[
\|\boldsymbol\pi^{\boldsymbol\theta}
-
\boldsymbol\pi^{\boldsymbol\theta'}\|_{1}
\le
\frac{C}{1-\beta}
\|\boldsymbol\theta-\boldsymbol\theta'\|.
\]
For each $t$,
\begin{align*}
\nabla_{\lambda(t)}
\mathcal G(\boldsymbol\theta)
&=
\beta^t
E^{\boldsymbol\pi^{\boldsymbol\theta},\bmu_E}
[1_{\{x(t) = \cdot\}}] - \beta^t
E^{\bpi_E,\bmu_E}
[1_{\{x(t) = \cdot\}}] \\
\nabla_{h(t)}
\mathcal G(\boldsymbol\theta) 
&=
\beta^t
E^{\boldsymbol\pi^{\boldsymbol\theta},\bmu_E}
[\Phi(x(t),a(t),\mu_E(t))] - \beta^t
E^{\bpi_E,\bmu_E}
[\Phi(x(t),a(t),\mu_E(t))].
\end{align*}

For bounded $\psi$, by tensorization of the total variation distance, whereby the total variation between product measures is bounded by the sum of marginal total variations, we have
\begin{align*}
&\bigg|\mathbb E^{\boldsymbol\pi,\bmu_E}\!\left[\sum_{t=0}^{T-1}\psi(x(t),a(t),\mu_E(t))\right]
-
\mathbb E^{\boldsymbol\pi',\bmu_E}\!\left[\sum_{t=0}^{T-1}\psi(x(t),a(t),\mu_E(t))\right]\!\bigg|
\le
T \|\psi\|_{\infty}
\|\boldsymbol\pi-\boldsymbol\pi'\|_1.
\end{align*}
Hence
\[
\|\nabla\mathcal G(\boldsymbol\theta)
-
\nabla\mathcal G(\boldsymbol\theta')\|
\le
\frac{T C^2}{(1-\beta)^2}
\|\boldsymbol\theta-\boldsymbol\theta'\|.
\]
Since $C=1+B_\Phi$,
\[
L
=
\frac{T(1+B_\Phi)^2}{(1-\beta)^2}.
\]

\end{proof}

The following result establishes the consistency of Algorithm~\ref{GD2} in view of the previous result. The result is well-known and so we omit the details.

\begin{theorem}
Suppose the step-size $\gamma$ in the gradient descent algorithm satisfies  $0<\gamma \leq \frac{1}{L}$. Then,
\begin{align*}
\cG(\btheta_k) - \cG(\btheta^*) &\leq \frac{\|\btheta_0-\btheta^*\|^2}{2\gamma k} \\   
\intertext{and}
\|\nabla \cG(\theta_k)\| &\rightarrow 0~~\text{as}~~ k \rightarrow \infty.
\end{align*}
\end{theorem}

\section{Numerical Example}

We validate our framework on a mean-field traffic routing game 
whose expert policy exhibits state-dependent preference reversal. A population of drivers chooses between a shorter main road and a longer alternative route.

The state space $\sX = \{0,1,2,3\}$ encodes four congestion levels (Light, Light-Medium, Medium-Heavy, Heavy) and the action space $\sA = \{0,1\}$ corresponds to route choice (Main, Alternative). The transition kernel has the form
\begin{equation*}
p(y|x,a,\mu) = p_{\text{base}}(y|x,a) + \Delta p(y|\mu),
\end{equation*}
where the base matrices are given in Table~\ref{tab:transitions} and the mean-field correction
\begin{align*}
\Delta p(\text{Light}|\mu) &= -0.3 \cdot \mu(\text{Heavy}), \\
\Delta p(\text{Heavy}|\mu) &= +0.3 \cdot \mu(\text{Heavy})
\end{align*}
introduces a negative externality: higher population-level congestion makes escaping to light traffic harder system-wide. 

The main road exhibits a congestion trap ($p_{\text{base}}(\text{Heavy}|\text{Heavy},\text{Main}) = 0.50$, with only 5\% chance of reaching Light), while the alternative offers more favorable transitions out of congestion (25\% and 15\%, respectively).

\begin{table}[t]
\centering
\caption{Base transition matrices $P_{\text{base}}^{(a)}$ (rows: current state $x$; columns: next state $y$).}
\label{tab:transitions}
\begin{tabular}{lcccc}
\toprule
& Light & L-Med & M-Heavy & Heavy \\
\midrule
\multicolumn{5}{l}{\textit{Main road} ($a=0$)} \\
Light      & 0.70 & 0.20 & 0.08 & 0.02 \\
L-Med      & 0.35 & 0.45 & 0.15 & 0.05 \\
M-Heavy    & 0.15 & 0.30 & 0.40 & 0.15 \\
Heavy      & 0.05 & 0.15 & 0.30 & 0.50 \\
\midrule
\multicolumn{5}{l}{\textit{Alternative route} ($a=1$)} \\
Light      & 0.65 & 0.25 & 0.08 & 0.02 \\
L-Med      & 0.45 & 0.35 & 0.15 & 0.05 \\
M-Heavy    & 0.25 & 0.35 & 0.30 & 0.10 \\
Heavy      & 0.15 & 0.25 & 0.35 & 0.25 \\
\bottomrule
\end{tabular}
\end{table}

The expert policy and stationary mean-field distribution are
\begin{equation*}
\pi_E(\text{Main}|x) = (0.85,\; 0.70,\; 0.45,\; 0.20), \end{equation*} for $x = 0, 1, 2, 3$ respectively,
and
\begin{equation*}
\mu_E = (0.45,\; 0.30,\; 0.20,\; 0.05)^\top,
\end{equation*}
where the entries are indexed by increasing congestion level. The policy exhibits preference reversal: 85\% main-road preference in light traffic, transitioning to 80\% alternative preference in heavy traffic.

We compare two reward representations, both solved via Algorithm~1 with $\beta = 0.9$ and $K = 10{,}000$ iterations.
Feature expectations are computed analytically from $(\pi_E, \mu_E)$ with horizon truncation $T = 32$, eliminating sampling variance.

The \emph{linear baseline} (10 parameters) uses additive features
\begin{equation*}
r_\theta(x,a,\mu) = \sum_{i=0}^{3} \theta^x_i \mathbf{1}_{\{x=i\}}
+ \sum_{j=0}^{1} \theta^a_j \mathbf{1}_{\{a=j\}}
+ \sum_{k=0}^{3} \theta^\mu_k \mu_k,
\end{equation*}
with learning rate $\gamma = 0.005$. 

This additive specification tests whether state, action, and mean-field contributions suffice to explain the expert's behavior. Crucially, we exclude state--action interaction terms; including them would trivially enable perfect fitting but would obscure the structural limitation we aim to highlight. The kernel method, by contrast, captures these interactions implicitly through the kernel function.


The \emph{kernel method} (12 parameters) uses a Gaussian kernel $k(z,z') = \exp(-\|z-z'\|^2 / 2\sigma^2)$ with $\sigma = 0.5$ and $M = 8$ anchor points (all state--action pairs at $\mu_E$):
\begin{equation*}
r_\theta(x,a,\mu) = \lambda(x) + \sum_{i=1}^{8} \alpha_i k\big((x,a,\mu),\, (x_i, a_i, \mu_E)\big),
\end{equation*}
where $\theta = (\lambda, \alpha) \in \mathbb{R}^{4} \times \mathbb{R}^{8}$, with learning rate $\gamma = 0.003$.

\begin{table}[t]
\centering
\caption{Performance Comparison}
\label{tab:comparison}
\begin{tabular}{lccc}
\toprule
Method & Policy Error & Gradient Norm & Parameters \\
\midrule
Linear Baseline & 11.60\% & 0.037 & 10 \\
Kernel-Based & 0.10\% & 0.001 & 12 \\
\bottomrule
\end{tabular}
\end{table}

\begin{table}[t]
\centering
\caption{Learned Policies by State}
\label{tab:policies}
\begin{tabular}{lcccccc}
\toprule
& \multicolumn{2}{c}{Expert} & \multicolumn{2}{c}{Linear} & \multicolumn{2}{c}{Kernel} \\
State & Main & Alt & Main & Alt & Main & Alt \\
\midrule
Light & 0.85 & 0.15 & 0.73 & 0.27 & 0.85 & 0.15 \\
Light-Med & 0.70 & 0.30 & 0.67 & 0.33 & 0.70 & 0.30 \\
Med-Heavy & 0.45 & 0.55 & 0.58 & 0.42 & 0.45 & 0.55 \\
Heavy & 0.20 & 0.80 & 0.48 & 0.52 & 0.20 & 0.80 \\
\bottomrule
\end{tabular}
\end{table}

Tables~\ref{tab:comparison} and~\ref{tab:policies} and Figure~\ref{fig:convergence} summarize the comparison. The kernel method achieves near-exact recovery ($\|\pi_\theta - \pi_E\|_F = 0.001$, 0.10\% error), while the linear baseline converges to 11.60\%, that is, a 99.1\% reduction in error. Both methods have converged: the final gradient norms are 0.001 (kernel) and 0.037 (linear). The linear model's residual gradient reflects the infeasibility of the feature-matching constraints in $(\widehat{\text{OPT}}_1)$ within the additive reward class.

{\tb{
Table~\ref{tab:policies} reveals that the linear model captures the correct qualitative trend but cannot achieve the preference reversal, outputting $\pi(\text{Main}|\text{Heavy}) = 0.48$ versus the expert's $0.20$. Because the additive reward decomposes as $f(x) + g(a) + h(\mu)$, the action ranking $g(\text{Main}) > g(\text{Alternative})$ holds uniformly across states, precluding preference reversal. The learned parameters $\theta^a = [0.67, -0.67]$ impose a uniform main-road bias across all states, confirming the structural limitation. Figure~\ref{fig:convergence} shows the kernel method reaches 5\% error by iteration 2{,}627 and continues to decrease exponentially, consistent with the convergence guarantee of Theorem~5.1 and the $L$-smoothness established in Proposition~5.1. The linear baseline asymptotes to its structural floor, confirming that representational capacity (not optimization) is the bottleneck.}}


\begin{figure}[t]
\centering
\includegraphics[width=\columnwidth]{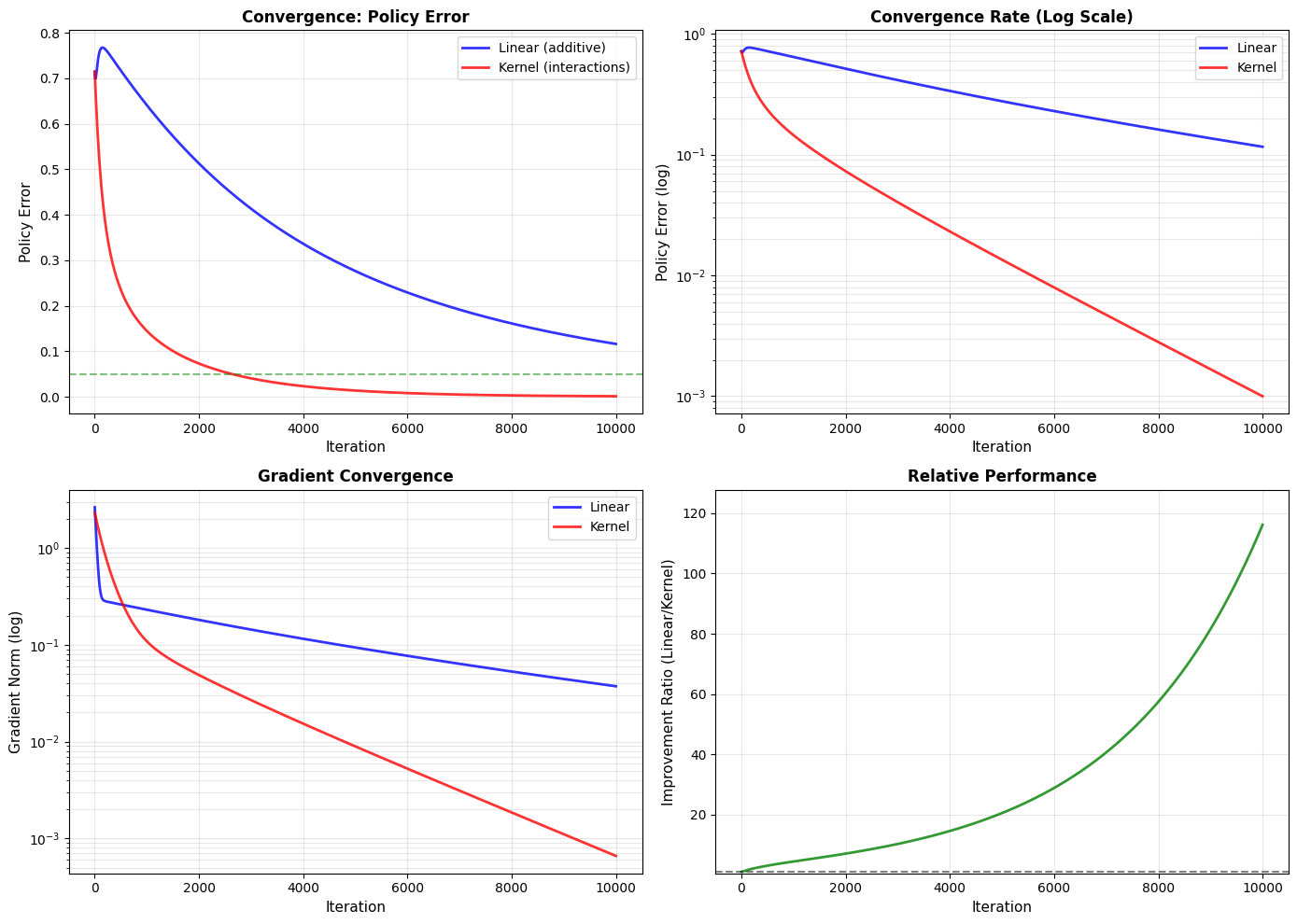}
\caption{Convergence comparison: (a) policy error; (b) log-scale view;
(c) gradient norms; (d) performance ratio (ratio of linear to kernel policy error over iterations).}
\label{fig:convergence}
\end{figure}

\section{Conclusion}
{\color{black}
In this work, we studied the IRL problem for infinite-horizon stationary MFGs through the lens of maximum causal entropy. By modeling the unknown reward function within a RKHS, we enabled the inference of rich and nonlinear reward structures directly from expert demonstrations. This addresses key limitations of existing IRL approaches that rely on linear reward models and finite-horizon settings. To solve the resulting problem, we introduced a Lagrangian relaxation that reformulates the IRL objective as an unconstrained log-likelihood maximization, which we tackled using a gradient ascent algorithm. We established the theoretical consistency of the proposed approach by proving the smoothness of the log-likelihood objective through the Fr\'echet differentiability of the associated soft Bellman operators. Finally, our numerical experiments on a mean-field traffic routing game validated the effectiveness of the method, demonstrating that the learned policies successfully replicate expert behavior.
}

We also extended the framework to the finite-horizon non-stationary setting, where we showed that the log-likelihood reformulation is structurally unavailable and developed an alternative gradient descent algorithm on the convex dual. Several directions merit future investigation. First, extending this framework to continuous-time formulations presents significant challenges. It would require reformulating the maximum causal entropy principle using path-space measures and solving the inverse problem within the context of Hamilton--Jacobi--Bellman equations and Fokker--Planck equations for the mean-field term. Second, a formal finite-sample analysis is a crucial next step. This would involve deriving high-probability concentration bounds for the error in the recovered policy as a function of the number of expert trajectories $d$, the trajectory length $T_i$, and the complexity of the RKHS (e.g., its metric entropy), building upon the $L$-smoothness established in Proposition~5.1.


\bibliographystyle{ieeetr}
\bibliography{references}

\end{document}